\documentclass{article}

\usepackage{microtype}
\usepackage{graphicx}
\usepackage{subcaption}
\usepackage{physics}
\usepackage{booktabs} %
\usepackage[most]{tcolorbox}
\usepackage{setspace}
\usepackage{hyperref}
\usepackage{url}

\usepackage[numbers]{natbib}
\usepackage[preprint]{neurips_2026}
\makeatletter
\renewcommand{\@notice}{}
\renewcommand{\@maketitle}{%
  \vbox{%
    \hsize\textwidth
    \linewidth\hsize
    \vskip 0.05in
    \@toptitlebar
    \centering
    {\LARGE\bf \@title\par}
    \@bottomtitlebar
    \vskip 6\p@ %
    \def\And{%
      \end{tabular}\hfil\linebreak[0]\hfil%
      \begin{tabular}[t]{c}\bf\rule{\z@}{18\p@}\ignorespaces%
    }
    \def\AND{%
      \end{tabular}\hfil\linebreak[4]\hfil%
      \begin{tabular}[t]{c}\bf\rule{\z@}{18\p@}\ignorespaces%
    }
    \begin{tabular}[t]{c}\bf\rule{\z@}{18\p@}\@author\end{tabular}%
    \vskip 0.12in \@minus 0.06in
  }
}
\makeatother

\usepackage{amsmath}
\usepackage{amssymb}
\usepackage{mathtools}
\usepackage{amsthm}

\usepackage[T1]{fontenc}
\usepackage{xcolor}
\usepackage{enumitem}
\usepackage{amsmath,amssymb,amsfonts}
\usepackage{subcaption}
\usepackage{tikz}
\usepackage{lipsum}
\usepackage{graphicx,scalerel}
\usepackage{amsthm}
\usepackage{thmtools,thm-restate}
\usepackage{nicefrac}

\usepackage{mdframed}
\usepackage{algorithm}
\usepackage{algorithmic}
\usepackage[most]{tcolorbox}
\usepackage{setspace}
\definecolor{encInk}{RGB}{166,124,52}  \definecolor{encBar}{RGB}{244,234,224}
\definecolor{encBody}{RGB}{252,249,246}
\definecolor{decInk}{RGB}{74,122,117}  \definecolor{decBar}{RGB}{226,237,235}
\definecolor{decBody}{RGB}{249,252,251}
\definecolor{cmt}{RGB}{140,140,140}
\newcommand{\cmt}[1]{\hfill{\color{cmt}// #1}}
\tcbset{algbox/.style n args={4}{
  enhanced, sharp corners, colframe=#1, colback=#2, boxrule=0.6pt,
  left=2mm, right=2mm, top=0.5mm, bottom=0.5mm, width=\linewidth,
  fontupper=\scriptsize,
  title=#4, fonttitle=\scriptsize\bfseries\scshape, coltitle=#1, colbacktitle=#3,
  attach boxed title to top left={xshift=0pt, yshift=-0.6pt},
  boxed title style={sharp corners, boxrule=0.6pt, colframe=#1}, titlerule=0pt }}
\newcommand{\algstyle}{\renewcommand{\algorithmicrequire}{\textit{\textbf{Input:}}}}
\usepackage{float} %
\usepackage{wrapfig}
\theoremstyle{plain}
\newtheorem{theorem}{Theorem}[section]
\newtheorem{proposition}[theorem]{Proposition}
\newtheorem{lemma}[theorem]{Lemma}

\theoremstyle{definition}

\theoremstyle{remark}

\usepackage{xcolor}

\definecolor{amber}{RGB}{206,18,86}
\definecolor{blueish}{RGB}{43,140,190}
\definecolor{pu}{HTML}{984ea3}

\usepackage[textsize=tiny]{todonotes}

\title{Requential Coding: Pushing the Limits of Model Compression with Self-Generated Training Data}

\author{%
  Shikai Qiu \\
  New York University \\
  \And
  Marc Finzi \\
  Carnegie Mellon University \\
  \And
  Yujia Zheng \\
  Carnegie Mellon University \\
  \And
  Kun Zhang \\
  Carnegie Mellon University \\
  \And
  Andrew Gordon Wilson \\
  New York University \\
}

\begin{document}

\maketitle

\begin{abstract}

Compression is fundamental to intelligence. A model that can represent its training data as a short code has discovered regularities that enable generalization. Large neural networks may learn functions far \emph{simpler} than their parameter counts suggest, but it is challenging to construct codes that realize this simplicity. Parameter-based methods such as quantization produce code lengths that scale with model size, insensitive to how much information the parameters store. Prequential coding bypasses this issue by compressing the training trajectory, but codes the exact data sequence regardless of how much the model learns, yielding large codes when the data has high entropy. We introduce \emph{requential coding}\renewcommand{\thefootnote}{$\dagger$}\footnote{Code available at \url{https://github.com/shikaiqiu/requential-coding}.}\setcounter{footnote}{0}\renewcommand{\thefootnote}{\arabic{footnote}}, where a teacher model selects training samples drawn from the student's own distribution. The student's code records only these selections, which cost bits only where teacher and student disagree. The resulting code length is independent of parameter count and data entropy, and often orders of magnitude shorter than the prequential counterpart, with an advantage that grows with scale. This compression sheds light on phenomena inaccessible to prior compressors. Holding loss fixed, larger models and ensembles compress to much smaller sizes \emph{despite} more parameters. Plugged into a PAC-Bayes bound, the requential code yields state-of-the-art generalization guarantees for billion-parameter LLMs, outperforming bounds built on aggressive post-training quantization even granted zero error. The bound \emph{tightens with scale} in the compute-optimal regime, as models become increasingly compressible relative to dataset size. The same code predicts that models gradually overfit when trained for multiple epochs. It also isolates the learnable information in a dataset from its unpredictable, random content, revealing that lower-entropy text holds far more learnable structure than higher-entropy image data.

\end{abstract}

\section{Introduction}

Measuring compression is key to understanding generalization in deep learning.
In order to compress data, a model must discover regularities that facilitate generalization. This intuition underlies fundamental principles of induction, such as \emph{Occam's razor}: the simplest explanation consistent with observations is most likely to be true. A strong enough compression can guarantee a model's generalization performance, limit memorization, and even reveal how much learnable information content is in the training data. Indeed, a growing body of evidence suggests that neural networks often learn functions far simpler than their parameters could express \citep{li2018measuring, frankle2018lottery, zhou2018non, lotfi2022pac, wilson2025deep}.

However, finding a sufficiently good compression at scale remains a fundamental open question. It could be that larger neural networks find even simpler, more compressible functions, but demonstrating this compressibility becomes increasingly difficult with scale. As we scale model and data size, existing model compression schemes are inflated by quantities unrelated to actual learning: post-training quantization \citep{frantar2023gptq, tseng2024quip}, which directly compresses the learned parameters, produces codes that increase linearly with model size regardless of information stored. Alternatively, prequential coding \citep{dawid1984present, blier2018description} codes a model through its training data, compressed using the training process itself, but the code grows linearly with dataset size as it must encode the exact dataset encountered regardless of how much information the model extracts. Neither approach captures how information actually transfers from data to the model. Accordingly, the complexity estimates they produce clash both with the empirical fact that scaling models and data improves generalization, and with the theory of infinite limits, where networks converge to well-defined limits as size grows \citep{yang2022tensor,yang2023feature,dey2026don}.

\begin{figure*}[t]
\centering
\includegraphics[width=\linewidth]{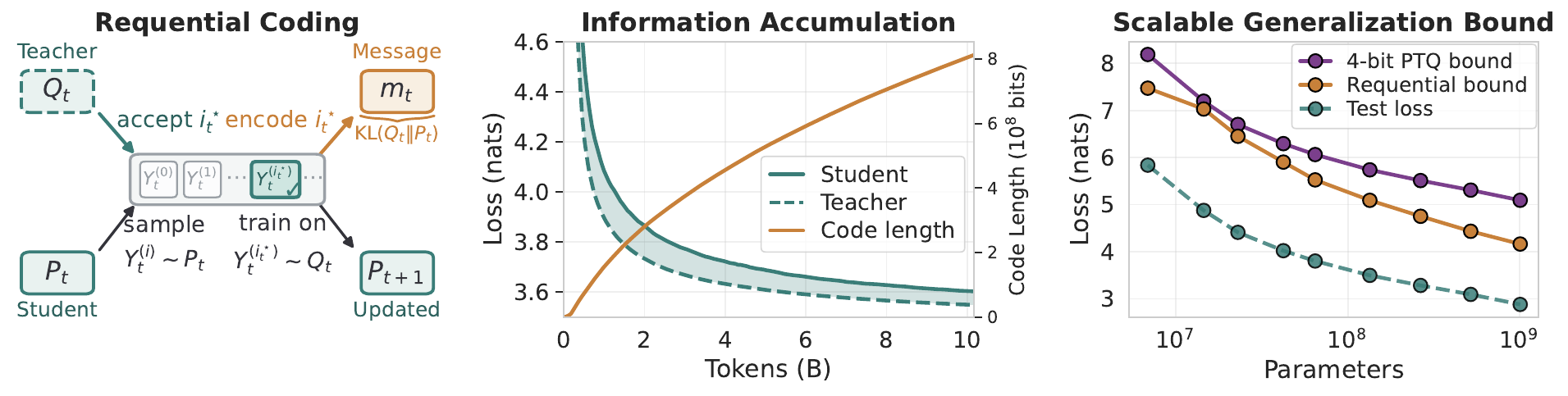}
\caption{\small\textbf{Requential coding achieves strong model compression.} (\textbf{Left}) 
The student model $P_t$ being compressed samples candidates $Y_t^{(0)}, Y_t^{(1)}, \ldots \overset{\mathrm{i.i.d.}}{\sim} P_t$ for its own training data. A teacher model $Q_t$ accepts an index $i_t^\star$ chosen so that $X_t = Y_t^{(i_t^\star)}$ is marginally distributed as $Q_t$, and the student trains on $X_t$ to yield $P_{t+1}$. A message $m_t$ of about $\mathrm{KL}(Q_t\|P_t)$ bits encodes the accepted index using relative entropy coding (REC) and is appended to the student's code, and the process repeats. The student can be decoded from the messages alone, without the teacher. (\textbf{Middle}) As training progresses, shown here for language modeling on FineWeb, the student tracks the teacher's loss on real data with the loss gap approximately equal to their KL. The code length for the student, approximately the cumulative teacher-student KL, is roughly equal to the integral of their loss gap. (\textbf{Right}) Strong model compression sheds lights on many phenomena, such as generalization. Our code yields state-of-the-art generalization bounds for compute-optimal LLMs, tightening with scale and outperforming the lossless idealization of 4-bit post-training quantization that set the previous best bounds.}
\label{fig:teaser}
\end{figure*}

We introduce \emph{requential coding}, a substantially more efficient model compressor that sheds light on a variety of generalization phenomena, and enables state-of-the-art generalization bounds, for large neural networks. Requential coding is fundamentally based on two observations. First, like prequential coding, we should compress the data instead of parameters to leverage the sample efficiency of the model. Second, much of the information in the training data pins down microscopic details of the model that are unimportant for performance. Just as quantization and pruning discard unimportant bits in the parameters, we can discard unimportant bits in the data, coding an approximate model trained on a surrogate dataset that is vastly cheaper to describe. For example, instead of coding a particular realization of a training batch, we can code a random batch from the training distribution. Moreover, we only need to specify how that distribution departs from what the model already knows.

Requential coding works by coding a student model $P_t$, a generative model trained iteratively on data it itself generates, with its training samples chosen by a stronger teacher model $Q_t$. At each step $t$, the student proposes candidate samples, $Y_t^{(0)}, Y_t^{(1)}, \ldots$, drawn from its own distribution using a pseudorandom number generator with a known seed, so the code needs to record only the index of the proposal the teacher accepts. With an appropriate acceptance rule, relative entropy coding (REC) \citep{flamich2020compressing,theis2022algorithms} chooses this index so that the accepted $X_t$ is marginally a draw from the teacher $Q_t$, and coded in roughly $\mathrm{KL}(Q_t\|P_t)$ bits.
The teacher can evolve arbitrarily, typically by training on real data, and need not itself be transmitted.  Figure~\ref{fig:teaser} (left) illustrates this process.
Equivalently, the student is a lossy compression of the teacher, obtained by distillation with the teacher samples transmitted efficiently via REC.
As training progresses, the student closely tracks the teacher and its code length accumulates gradually, as shown in Figure~\ref{fig:teaser} (middle).

Requential coding is a strong model compressor, achieving orders of magnitude shorter codes than prequential coding (Section~\ref{sec:experiments}) and a rate near 1 bit per parameter for compute-optimal LLMs (Section~\ref{sec:bounds}). Moreover, this strong compression reveals a range of phenomena beyond the reach of parameter-based codes. It shows larger models and bigger ensembles can be more compressible despite having more parameters (Section~\ref{sec:scale}). It delivers state-of-the-art generalization bounds for billion-parameter compute-optimal LLMs with a generalization gap that decays as a power law in model size (Section~\ref{sec:bounds}). The same code further predicts models gradually overfit under data repetition (Section~\ref{sec:overfitting}), and measures how much useful information can be extracted from a dataset, isolating learnable structure from random information for principled data selection (Section~\ref{sec:info}).

\section{Background}

 We now review existing methods for model compression, clarifying their inefficiencies along the way, and then relative entropy coding, the core primitive behind requential coding. Throughout this paper, $\log$ denotes $\log_2$ and KL divergence is measured in bits.

\paragraph{Compressing Neural Network Parameters.}
Neural network parameters often contain substantial redundancy, allowing them to be compressed to far fewer degrees of freedom than their raw counts without significant loss in performance. Pruning and sparsification remove parameters that contribute little to predictive performance, revealing that often only a small fraction are functionally necessary \citep{lecun1989optimal,hassibi1993optimal,han2015deep, frankle2018lottery, lee2018snip, gale2019state}. Similarly, low-rank and subspace training methods constrain optimization to a small set of directions, producing models whose parameters can be compressed with matrix factorizations \citep{li2018measuring,lotfi2022pac,hu2022lora}. Instead of removing parameters, post-training quantization (PTQ) reduces the precision of the trained parameters far below the 32-bit or 16-bit floating point formats used during training \citep{han2015deep,nagel2020up,dettmers2022gpt3,frantar2023gptq,lin2024awq}. 
Among these methods, PTQ achieves state-of-the-art compression for large language models (LLMs) as judged by performance per bit and is widely adopted to reduce inference costs, with the best methods reaching $\leq 4$ bits per parameter at minor loss in performance via Hessian-based adaptive strategies \citep{frantar2023gptq,tseng2024quip}. 

A fundamental limitation of the above methods is that they aim to directly compress the final parameter values with little regard to how much information the parameters actually encode, failing to fully decouple the parameter count of the model from its compressed size. A model with billions of parameters trained only on a few data points after random initialization must be highly compressible, yet its parameters are typically neither sparse nor low-rank, and there is a limit on how much each parameter can be quantized without significantly altering the model output. As modern models are typically trained far short of the information capacity of their parameters \citep{kaplan2020scaling, hoffmann2022training}, parameter-based methods fail to reach the true limits of compression.

\paragraph{Prequential Coding.} \label{sec:preq}
Instead of compressing the parameters, prequential coding \citep{dawid1984present, rissanen2003universal} compresses a dataset presented as a sequence of batches $X_0,\dots,X_{T-1},$ using a generative model trained sequentially on that dataset. The encoder and decoder start from a shared model $P_0$ and agree on an update rule $G$ (e.g. gradient descent). At each step $t \geq 0,$ the encoder encodes the next batch $X_t$ with $\log 1/ P_t(X_t)$ bits using a streaming entropy code (e.g., arithmetic coding), then trains the model on $X_t$ to yield $P_{t+1} = G(P_t, X_t)$. The decoder recovers $X_t$ from the encoded message and $P_t,$ already available by induction, performs the same update to obtain an identical $P_{t+1},$ and the process repeats. As the model's approximation of the true data distribution improves, it takes fewer bits to encode future data points. The total code length for the dataset is $
L_{\mathrm{preq}}(X_{0:T-1})=\sum_{t=0}^{T-1}\log 1/ P_t(X_t),$
the area under the training loss curve. As the decoder recovers the trained models $P_1, \ldots, P_T$, the code is a compression of the model $P_T$ as well.

While the prequential code no longer pays for the parameter count, it must losslessly compress the exact training data sequence regardless of how much information the model extracts from it. Consider the following decomposition of the code length in expectation over the data distribution $P^\star:$
\begin{equation}
\small
\mathbb{E}[L_{\mathrm{preq}}(X_{0:T-1})] = \underbrace{\sum_{t=0}^{T-1} H(X_t)}_{\text{data entropy}} + \underbrace{\sum_{t=0}^{T-1} \mathbb{E}[\mathrm{KL}(P^\star \,\|\, P_t)]}_{\text{approximation error}}.
\end{equation}
The first term is the irreducible entropy of the data source, which is paid even by a perfect predictor, accumulating at a linear rate even after the model stops learning from the additional, unpredictable data. The second term captures the gap between the model's predictions and the true distribution, which starts high and decreases as the model improves. An ideal code should not need to pay for either term in full. The data entropy should not be necessary if we are agnostic to which \emph{specific} sample $X_t$ is from $P^\star$ and instead have it be a \emph{random} sample. Paying the approximation error in full is likewise excessive because actual learning is incremental: the model can meaningfully absorb only a small improvement per step, empirically learning the simplest structures in the data first before moving to more complex patterns \citep{kalimeris2019sgd,liu2021probing,saphra2019understanding}, yet the prequential code pays the full remaining gap to the truth at every step. As we will soon see, requential coding addresses both issues.

Since prequential coding simultaneously encodes both the model and its training data, a commonly used heuristic for isolating the information stored in the final model $P_T$ alone is $L_{\mathrm{heuristic}}(P_T)=\sum_{t=0}^{T-1}\log 1/P_t(X_t) - \log 1/P_T(X_t),$ i.e., subtracting the compressed size of the data $X_{0:T-1}$ given $P_T$ from the combined code length for both $P_T$ and $X_{0:T-1}$ \citep{blier2018description,feldman1998information,whitney2020evaluating,zhang2020measuring,finzi2025compute,finzi2026entropy}. Unlike prequential coding, this heuristic only provides a non-rigorous estimate of the compressed model size but does not provide a valid compression and decompression scheme.

\paragraph{Relative Entropy Coding.}\label{sec:rec}
Relative entropy coding (REC) \citep{flamich2020compressing,theis2022algorithms} provides a compression primitive to transmit a random sample from a target distribution $Q$ using fewer bits than its entropy, by sampling candidates from a reference distribution $P$ (thought of as an approximation of $Q$) and selectively accepting them. We assume $P$ and $Q$ are discrete distributions, though the algorithm extends to continuous variables.  The encoder has access to both $P$ and $Q$ and can evaluate their likelihoods, while the decoder has only $P$. Both hold shared randomness $S$, which can be implemented by a pseudorandom number generator (PRNG) with a common seed. This randomness defines an indexed proposal sequence $Y_0,Y_1,\ldots \overset{\mathrm{i.i.d.}}{\sim} P.$ Because both sides can generate the same proposals from $P$, communicating a sample distributed as $Q$ reduces to communicating which proposal to accept: the REC procedure selects a proposal index $i^\star$ via an acceptance rule such that $Y_{i^\star}$ is marginally distributed as $Q$, and transmits a prefix-free code $m$ for that index. The decoder runs $\mathrm{REC.Decode}(P, m, S) \to Y_{i^\star},$ recovering $i^\star$ from $m$ and regenerating the accepted proposal $Y_{i^\star}$ using the shared randomness.  With a counter-based PRNG, $Y_{i^\star}$ can be regenerated directly from its index without generating earlier proposals. Figure~\ref{alg:requential-algs} (bottom) gives pseudocode for the simplest, inefficient implementation of REC using rejection sampling. More efficient approaches like \citet{li2018strong} achieve an expected code length bounded by $\mathrm{KL}(Q\|P) + \log(1+\mathrm{KL}(Q\|P)) + 5$ using the Poisson functional representation (PFR), approaching the information-theoretic lower bound $\mathrm{KL}(Q\|P),$ but the encoder needs to draw an unbounded number of proposals. Ordered random coding (ORC) \citep{theis2022algorithms} implements approximate sampling from $Q$ with the same code length bound while drawing on the order of $2^{\mathrm{KL}(Q\|P)}$ proposals.

The key to REC is that it reduces communicating a \emph{specific} sample from $Q$ to communicating only \emph{some} random sample whose marginal distribution is $Q$, by choosing among draws from $P.$ By being agnostic to \emph{which} sample is recovered, REC can spend far fewer bits, analogous to bits-back coding \citep{hinton1993autoencoders}. As a limiting case, when $P=Q$ the encoder can simply transmit a constant message and the decoder takes the first proposal from $P$ using the shared randomness, so only $O(1)$ bits are communicated rather than the $H(Q)$ bits entropy coding would require. More generally, when $Q$ is close to $P$ the expected message length can be much smaller than the naive $H(Q)$.

\section{Requential Coding}
Motivated by the shortcomings of both parameter-based compression and prequential coding, we introduce \emph{requential coding}, a highly efficient compression scheme for generative models whose code length depends on neither parameter count nor data entropy. We define the encoder and decoder protocols with the resulting code length and runtime (Section~\ref{sec:req_construction}), then evaluate requential coding against prequential coding and quantization on transformers trained on text and images (Section~\ref{sec:experiments}).

\renewcommand{\algorithmicrequire}{\textbf{Input:}}
\renewcommand{\algorithmicensure}{\textbf{Output:}}

\begin{figure}[t]
\noindent
\begin{minipage}[t]{0.49\linewidth}
\begin{tcolorbox}[algbox={encInk}{encBody}{encBar}{Requential.Encode}, equal height group=algr1]
\begin{algorithmic}[1]
\algstyle
\REQUIRE \textit{teachers $(Q_t)_{t=0}^{T-1}$, student init.\ $P_0$, update $G$, seed $s$, steps $T$}
\FOR{$t = 0$ to $T-1$}
  \STATE $S_t \leftarrow \mathrm{PRNG}(s,t,\cdot)$ \cmt{step $t$ randomness}
  \STATE $m_t \leftarrow \mathrm{REC.Encode}(Q_t, P_t, S_t)$ \cmt{$X_t \sim Q_t$}
  \STATE $X_t \leftarrow \mathrm{REC.Decode}(P_t, m_t, S_t)$
  \STATE $P_{t+1} \leftarrow G(P_t, X_t)$
\ENDFOR
\STATE \textbf{return} $(m_t)_{t=0}^{T-1}$
\end{algorithmic}
\end{tcolorbox}
\end{minipage}\hfill
\begin{minipage}[t]{0.49\linewidth}
\begin{tcolorbox}[algbox={decInk}{decBody}{decBar}{Requential.Decode}, equal height group=algr1]
\begin{algorithmic}[1]
\algstyle
\REQUIRE \textit{messages $(m_t)_{t=0}^{T-1}$, student init.\ $P_0$, update $G$, seed $s$, steps $T$}
\FOR{$t = 0$ to $T-1$}
  \STATE $S_t \leftarrow \mathrm{PRNG}(s,t,\cdot)$
  \STATE $X_t \leftarrow \mathrm{REC.Decode}(P_t, m_t, S_t)$
  \STATE $P_{t+1} \leftarrow G(P_t, X_t)$
\ENDFOR
\STATE \textbf{return} $P_T$
\end{algorithmic}
\end{tcolorbox}
\end{minipage}

\vspace{2.5mm}
\noindent
\begin{minipage}[t]{0.49\linewidth}
\begin{tcolorbox}[algbox={encInk}{encBody}{encBar}{REC.Encode~\,\normalfont\itshape\scriptsize\color{cmt}(rejection sampling)}, equal height group=algr2]
\begin{algorithmic}[1]
\algstyle
\REQUIRE \textit{target $Q$, reference $P$, randomness $S$, ratio bound $R$}
\FOR{$i = 0, 1, 2, \ldots$}
  \STATE Draw $Y_i \sim P$, $U_i \sim \mathrm{Unif}[0,1)$ using $S(i)$
  \STATE \textbf{if} $U_i \leq \dfrac{Q(Y_i)}{R\,P(Y_i)}$ \textbf{then return} $m \leftarrow \mathrm{code}(i)$
\ENDFOR
\end{algorithmic}
\end{tcolorbox}
\end{minipage}\hfill
\begin{minipage}[t]{0.49\linewidth}
\begin{tcolorbox}[algbox={decInk}{decBody}{decBar}{REC.Decode}, equal height group=algr2]
\begin{algorithmic}[1]
\algstyle
\REQUIRE \textit{reference $P$, message $m$, randomness $S$}
\STATE $i^\star \leftarrow \mathrm{decode}(m)$
\STATE Draw $Y_{i^\star} \sim P$ using $S(i^\star)$
\STATE \textbf{return} $Y_{i^\star}$
\end{algorithmic}
\end{tcolorbox}
\end{minipage}

\vspace{1.5mm}
\caption{\small\textbf{Requential coding.} (\textbf{Top})
The requential encoder uses a sequence of teachers to define the student's training distribution, whose samples are coded relative to the student via relative entropy coding (REC). The decoder reconstructs the identical training steps without needing the teachers. The encoder and decoder stay synchronized and the student state is never transmitted.
 (\textbf{Bottom}) To provide intuition, we show how REC can be implemented with rejection sampling. Samples are drawn from $P$ until one is accepted according to $Q$ using a bound $R \geq \max_x Q(x)/P(x)$, and the accepted index $i$ is transmitted via $\mathrm{code}(\cdot)$, a universal integer code inverted by $\mathrm{decode}(\cdot)$. From the index, the sample can be decoded using only $P$ by advancing the PRNG state. With an improved REC implementation, only $\mathrm{KL}(Q\|P)$ bits are needed on average to encode the index.
}
\label{alg:requential-algs}
\end{figure}

\subsection{Method}\label{sec:req_construction}
At a high level, requential coding changes prequential coding (Section~\ref{sec:preq}) in one way: rather than training on a pre-existing dataset, the student trains on data it itself generates, and the code records only the small amount of information a stronger teacher model contributes by deciding which of the self-generated samples are worth training on. Both the student and the teacher are generative models. We assume the encoder and the decoder can sample from the student, and that the encoder can additionally evaluate the likelihoods of both models. At each step, the student $P_t$ being coded generates candidate samples from its own distribution, each sample a batch of data, and the teacher $Q_t$ accepts one of them, $X_t$, under an acceptance rule that makes $X_t$ marginally a sample from the teacher. Training on $X_t$ is therefore distillation from $Q_t$, with the teacher's samples conveyed at a cost far below their entropy.

Specifically, the encoder and decoder agree on the student initialization $P_0$, update rule $G$ (e.g. gradient descent), PRNG seed $s$, number of training steps, and batch size. For the REC call at step $t$, both sides use shared randomness deterministically derived from $s$ and $t$ to define the same indexed proposal sequence $Y_t^{(0)},Y_t^{(1)},\ldots \overset{\mathrm{i.i.d.}}{\sim} P_t,$ which can be implemented with a counter-based PRNG keyed by $(s,t,i)$, so the $i$-th proposal $Y_t^{(i)}$ can be regenerated directly without generating proposals $Y_t^{(0)},\ldots,Y_t^{(i-1)}$. The encoder, which has access to $Q_t$, uses REC to choose an accepted proposal index $i_t^\star$ such that $Y_t^{(i_t^\star)}$ is marginally distributed as $Q_t$, and transmits a prefix-free code $m_t$ for that index. The decoder recovers $i_t^\star$ from $m_t$, regenerates the corresponding proposal from the shared seed, and sets $X_t = Y_t^{(i_t^\star)}$.
Both sides then apply the same update
$P_{t+1}=G(P_t,X_t),$
so their copies of the student remain synchronized. The teacher models $(Q_t)_t$ can be arbitrary, typically obtained by training on a stream of real data, and are needed only on the encoder side and never transmitted. The procedure is summarized in Figure~\ref{alg:requential-algs} and illustrated in Figure~\ref{fig:teaser} (left).

\paragraph{Code Length.} The code for the final student $P_T$ is the concatenation of messages $(m_0, \ldots, m_{T-1})$. Let $\ell_t=|m_t|$, and let $\mathcal F_{t-1}$ denote the history before REC call $t$, including $P_t$ and $Q_t$ but not the next message $m_t$ or sample $X_t$. The cumulative conditional expected code length satisfies
\begin{equation}\label{eq:req_codelen}
\overline{L}_{\mathrm{req}}
:=
\sum_{t=0}^{T-1}\mathbb E[\ell_t\mid\mathcal F_{t-1}]
\le
\sum_{t=0}^{T-1}
\left[
\mathrm{KL}(Q_t\|P_t)
+
2\log(1+\mathrm{KL}(Q_t\|P_t))
+\kappa
\right]
=:
\widehat{L}_{\mathrm{req}},
\end{equation}
with $\kappa<5.21$, which we prove in Appendix~\ref{appx:compute_overhead}. The extra logarithmic term relative to the familiar REC bound $\mathrm{KL}+\log(1+\mathrm{KL})+O(1)$ comes from using a universal integer code for the selected index, rather than a Zipf code tuned to $\mathrm{KL}(Q_t\|P_t),$ which the decoder cannot access. We show in Appendix~\ref{appx:realized-codelength} that the realized code length concentrates tightly around $\overline{L}_{\mathrm{req}}$ under typical training setups, and thus report the computable bound $\widehat{L}_{\mathrm{req}}$ as the code length in all experiments. Furthermore, the logarithmic and constant terms are negligible compared to the linear-scaling KL term for large batch sizes (typically $\gtrsim 1$M tokens for language models), so in practice $\widehat{L}_{\mathrm{req}}$ reduces to the cumulative teacher-student KL.

\paragraph{Runtime.} In most scientific applications we care only about evaluating the compressed model size in Eq.~\eqref{eq:req_codelen} rather than actually transmitting the model. In this case it suffices to run an equivalent stochastic process in which REC encoding and decoding are replaced by sampling $X_t$ directly from the teacher $Q_t$. We use this procedure to evaluate the requential code length throughout the paper. Suppose the teacher shares the student's architecture and advances by training on real data with the same batch size. Evaluating the code length then takes roughly $2\times$ the memory and $2.33\times$ the FLOPs of ordinary training: at each step we run one teacher forward pass to sample $X_t$, one student forward-backward pass on $X_t$ to advance $P_t$, and one teacher forward-backward pass to advance $Q_t$, contributing FLOPs in the ratio $\frac{1}{3}:1:1$. If the teacher checkpoints are already available, the compute overhead drops to a moderate $0.33\times$.

Actually transmitting a model can be prohibitively slow to encode, depending on the implementation.
For example, ORC draw about $2^{\mathrm{KL}(Q_t\|P_t)}$ proposals per call. 
When encoding time matters, we can accept a longer code in exchange for a shorter encoding time by dividing each batch into smaller blocks and transmitting one block at a time (see Figure~\ref{fig:block-size-tradeoff}). The decoding cost, in contrast, is unaffected by the block size, since the decoder only generates the accepted proposal using the decoded index, and requires FLOPs close to ordinary training. See Appendix~\ref{appx:compute_overhead} for further details.

\paragraph{How to Choose the Teacher.}
\begin{wrapfigure}{r}{0.32\linewidth}
\centering
\vspace{-1.1em}
\includegraphics[width=\linewidth]{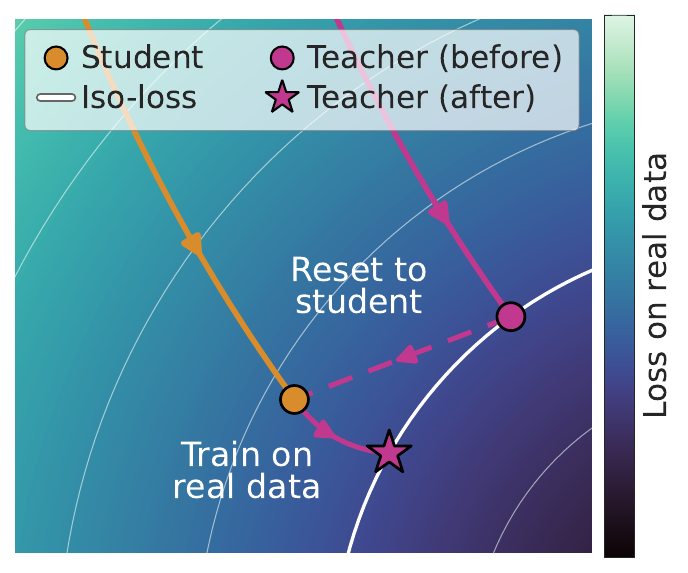}
\caption{\small\textbf{Iso-loss projection.} Teacher moves closer to the student while maintaining performance.}
\label{fig:isoloss-projection}
\vspace{-1.2em}
\end{wrapfigure}
A good teacher sequence balances two competing desiderata: each $Q_t$ should stay close to the current student, since every bit of divergence is charged to the code, yet it must run far enough ahead of the student to keep pulling the student toward the target distribution. We adopt the simplest possible choice where the teacher is trained on real data batches and shares the same architecture and hyperparameters as the student, so their divergence stays low due to similar training dynamics. We then introduce two improvements. 1) \emph{Teacher smoothing}: generating synthetic data from an exponential moving average (EMA) of the raw teacher checkpoints, reducing noise in the teacher trajectory that the student would otherwise pay to track.
2) \emph{Iso-loss projection}: periodically resetting the teacher to the current student and briefly training the teacher on real data with the student paused until the pre-reset loss is recovered, which approximately moves the teacher to the projection of the student onto its iso-loss surface (Figure~\ref{fig:isoloss-projection}). The new teacher has the same performance but typically a lower $\mathrm{KL}(Q_t\|P_t),$ shortening the subsequent code. We give pseudocode for both techniques in Appendix~\ref{app:smoothing-projection}, and expect substantial further gains from optimizing the teacher sequence.

\paragraph{Student as a Lossy Compression of the Teacher.}
Requential coding provides a lossless compression for a model trained via distillation\footnote{Specifically, \emph{hard} distillation where only the samples are used, not the logits.}, a technique widely used in the supervised setting \citep{hinton2015distilling} and increasingly central to modern LLM pre-training \citep{dubey2024llama, gemma2024, meta2025llama4} and post-training \citep{yang2025qwen3}. However, when the goal is to compress a model not itself trained via distillation, we can still leverage requential coding by distilling that model into a student, which is then a lossy compression of the original. We emphasize that there is no fundamental distinction between this type of lossy compression and those done by conventional parameter-based methods such as pruning and quantization: all trade losses in predictive performance for shorter codes and are lossy only insofar as the compressed model diverges from the original, for example as measured by their KL.

\begin{figure*}[t]
\centering
\includegraphics[width=\linewidth]{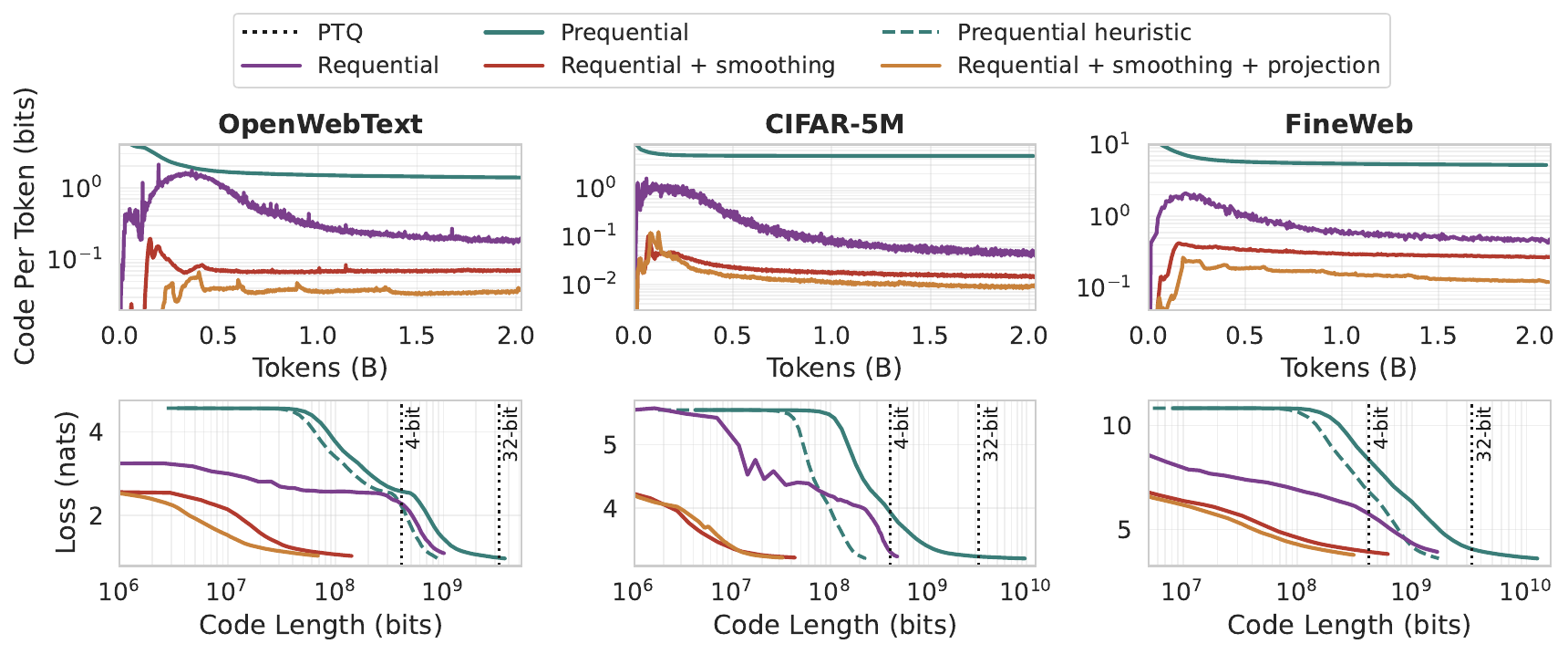}
\caption{\small
\textbf{Requential coding dominates the Pareto frontier of loss vs compressed model size.} Models with 100M parameters trained to 2B tokens on each dataset (columns), where $D$ counts the training tokens for the coded model.
\textbf{(Top)} Code length per token over training: the teacher-student KL for requential coding, with and without teacher smoothing and iso-loss projection, and the cross-entropy loss for prequential coding. \textbf{(Bottom)} Loss vs cumulative code length, with the idealized 4-bit PTQ and FP32 parameter sizes marked as vertical references. The prequential heuristic (dashed) subtracts the data code given the model from the prequential code, but does not form a valid model code.
}
\label{fig:train_curves}
\end{figure*}

\subsection{Benchmarking Compression of Transformers Trained on Text and Images}\label{sec:experiments}
We evaluate requential coding against other methods for compressing autoregressive transformers trained on OpenWebText with character-level tokenization, CIFAR-5M \citep{nakkiran2020deep} with one token per pixel, and FineWeb \citep{penedo2024fineweb} with the GPT-2 tokenizer. Each model has 100M parameters and is trained for 2B tokens.  We also include for reference the prequential heuristic (Section~\ref{sec:preq}), which subtracts the code length of the data given the final model and is \emph{not} a valid compressor. Weprovide full experiment details in Appendix~\ref{app:experiment}. Figure~\ref{fig:train_curves} shows that requential coding compresses the model far better than the alternatives. Its per-token cost (the teacher-student KL) runs one to two orders of magnitude below the prequential per-token cost (the teacher cross-entropy loss). With both teacher smoothing and iso-loss projection, the code shortens further without sacrificing performance. The resulting requential code dominates the Pareto frontier of loss vs code length and stops left of the 4-bit per parameter reference, with a significant gap on OpenWebText and CIFAR-5M, whereas the prequential code exceeds the FP32 parameter size and is a vacuous model compressor. Even the prequential heuristic, a non-rigorous estimate of the compressed model size, sits well above the requential code. 
\section{Understanding Learning and Generalization through Model Compression}
\label{sec:findings}
Beyond enabling models to be stored and transmitted with smaller file sizes, model compression provides a rigorous means to understand a variety of phenomena in machine learning. An extensive body of theory ties the shortest code that describes a model to how well it can generalize, such as the minimum description length principle \citep{rissanen1978modeling, grunwald2007minimum} and PAC-Bayes bounds \citep{shalev2014understanding,zhou2018non,lotfi2022pac,lotfi2023non}, and to how much useful information its training data contains, through epiplexity \citep{finzi2026entropy}. Operationalizing these theoretical insights, however, requires a strong model compressor: a weak one renders the theory vacuous or even misleading, for example making larger models appear more complex when they in fact generalize better and can be more compressible. Supplying this compressor is the major payoff of requential coding. In this section, we show that requential coding reveals a range of phenomena inaccessible to existing compressors, including that larger models can in fact be more compressible, generalization gap provably vanishes with scale for compute-optimal LLMs, and the distinction between the information content of a dataset and how much a model can learn from it. We provide full experiment details in Appendix~\ref{app:experiment}.

\begin{figure*}[t]
\centering
\includegraphics[width=\linewidth]{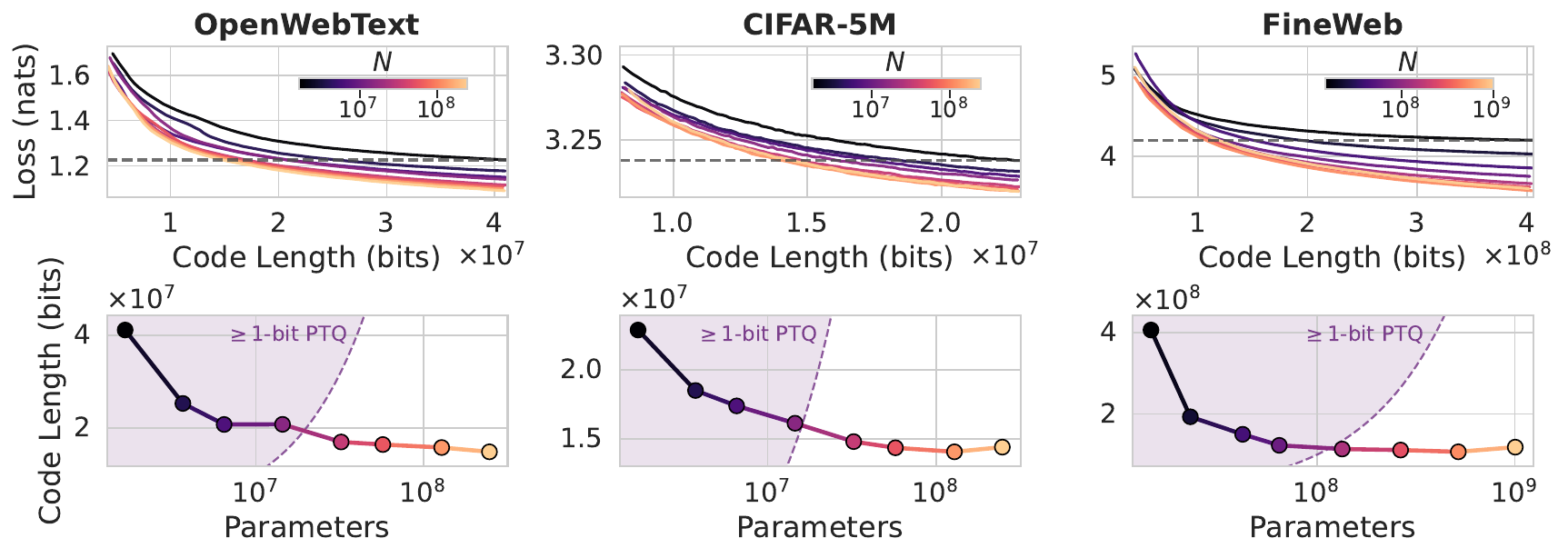}
\caption{\small\textbf{Requential coding reveals that larger models can be compressed to \emph{smaller} sizes despite having more parameters, holding the loss fixed.}
(\textbf{Top}) Student test loss vs code length curves traced by training: larger models reach a lower loss at a fixed code length. (\textbf{Bottom}) Requential code length to reach a fixed loss (final loss of the smallest model) \emph{shrinks} while parameter count grows by two orders of magnitude.
}
\label{fig:scaling-model}
\end{figure*}

\subsection{Larger Models and Ensembles Are More Compressible}
\label{sec:scale}
A long line of prior work has argued that overparameterized neural networks learn much simpler functions than their parameter counts suggest \citep{valleperez2019deep, frankle2018lottery, lotfi2022pac, lotfi2023non}, yet compressing the parameters from their final values alone is challenging, and increasingly intractable as model size $N$ grows and the information spreads over more parameters. The requential code is well suited to reveal this compressibility by encoding the actual information needed to produce the trained parameters. Figure~\ref{fig:scaling-model} (top) shows the effect of scaling model size $N$ on the test loss vs compressed size curves, showing larger models lie consistently below smaller ones. Equivalently, larger models can be compressed to \emph{fewer} bits at the same level of performance (bottom). The requential code directly translates the improved sample efficiency of larger models into fewer bits required to describe the model, a phenomenon parameter-based compression cannot leverage. As model size increases, their code length drops significantly below the 1-bit per parameter floor achievable by quantization.

\begin{wrapfigure}{r}{0.5\linewidth}
\vspace{-1.em}
\centering
\includegraphics[width=\linewidth]{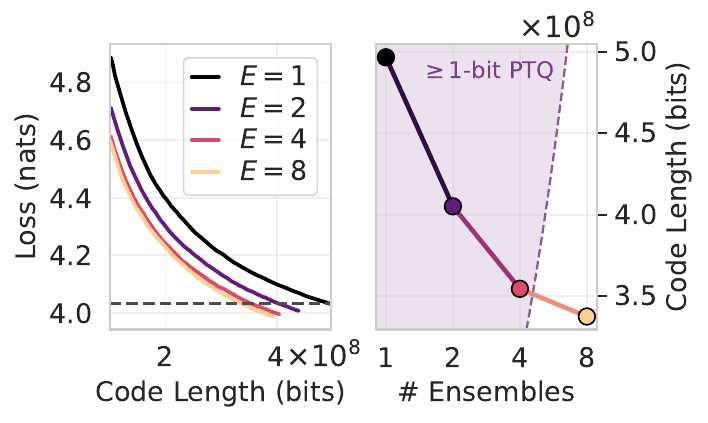}
\caption{\small\textbf{Larger ensembles are more compressible.}
(\textbf{Left}) Loss vs code length.
(\textbf{Right}) Code at fixed loss.
}
\label{fig:ensemble}
\vspace{-1.2em}
\end{wrapfigure}

The same holds for scaling an ensemble of models. Figure~\ref{fig:ensemble} trains an ensemble of $E \in \{1, 2, 4, 8\}$ members on FineWeb, each with 77M parameters trained for 1.5B tokens. All members share one teacher and one synthetic token stream, with the averaged student prediction used as the REC reference. A larger ensemble tracks the teacher more closely due to improved variance reduction, reaching a lower loss at a fixed code budget (left). The compressed size for reaching the final loss of a single model \emph{shrinks} as we ensemble over more models, despite the total parameter count increasing (right).

Since the compute used for compression and decompression in requential coding is coupled to the model size, the improved compression of larger models and ensembles owes in part to this extra compute. By spending more compute on compressing smaller models, for example using larger models to code their training data, it seems plausible the smaller models can be compressed further.

\subsection{Models Provably Generalize Better with Scale}
\label{sec:bounds}

\begin{figure*}[t]
\centering
\includegraphics[width=\linewidth]{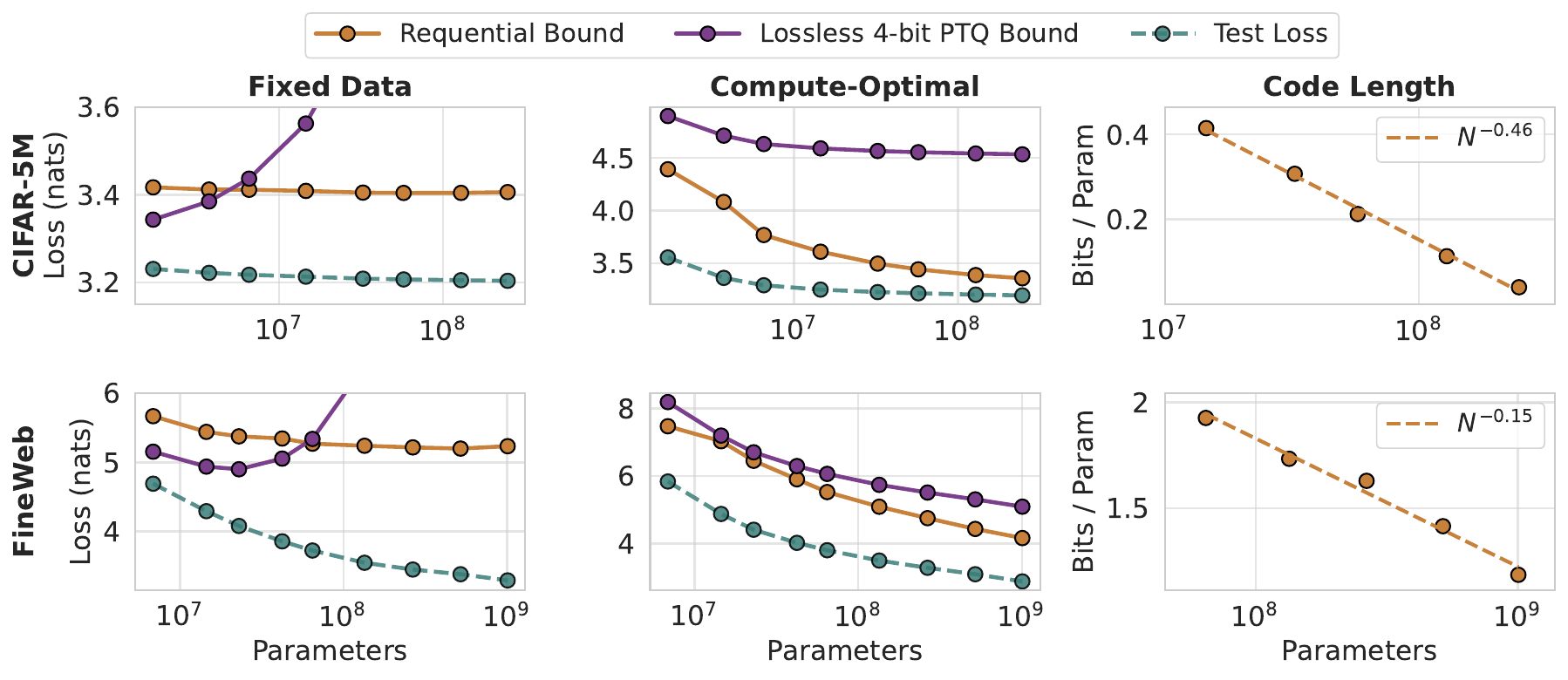}
\caption{\small\textbf{Requential codes certify that models generalize better with scale.} (\textbf{Left}) Holding data fixed, the requential code leads to PAC-Bayes generalization bounds that improve with model size in tandem with the true test loss, whereas the idealized PTQ bounds are competitive only for small, over-trained models. (\textbf{Middle}) With compute-optimal training ($D{=}20N$), our bound beats the $4$-bit PTQ bound representing the previous state of the art and tightens with scale. (\textbf{Right}) With compute-optimal training, compressed size per parameter decays as a power law with model size, predicting a generalization gap that vanishes with scale.}
\vspace{-0.5em}
\label{fig:generalization}
\end{figure*}

A certified compressed model size translates directly into a PAC-Bayes generalization guarantee. For a loss function taking values in $[0,\Delta]$, with probability at least $1-\delta,$ the expected risk $R(h)$ is bounded in terms of the empirical risk $\hat R(h)$ by:
\begin{equation}
    R(h) \le \hat{R}(h) + \Delta\sqrt{\frac{L(h)\ln 2 + \ln(1/\delta)}{2n}},
\end{equation}
where $L(h)$ is the length of a prefix-free code for $h$ and $n$ is the number of i.i.d. training examples \citep{shalev2014understanding,zhou2018non,lotfi2022pac,lotfi2023non}. For autoregressive language models, the prediction-smoothed per-token risk admits an analogous bound \citep{lotfi2024unlocking}, whose primary term \citet{finzi2025compute} reduce to a linear dependence on the per-token complexity: $R_s(h) \le \hat{R}(h) + C\ln V + (\Sigma+\sqrt2)\sqrt{C},$
where $C = \frac{1}{D}\qty[L(h)\ln 2 + \ln(|K|/\delta)]$ is the per-token complexity, $D$ is the number of training tokens, $V$ is vocabulary size, $\Sigma$ is a loss-variance term, $K$ is a finite set, and $\delta$ is the failure probability. We restate the full theorem as Theorem~\ref{thm:finzi_bound_rewritten} in Appendix~\ref{appx:discuss_bound}. Under this bound, a model that generalizes well must achieve low training loss while being compressible relative to the dataset size.  

Plugging in the requential code length $\widehat{L}_{\mathrm{req}}$ from Eq.~\eqref{eq:req_codelen} and the empirical risk (per-token cross-entropy) on the real training data (CIFAR-5M or FineWeb), we obtain a bound for the student model's expected risk on the real test data. $D$ counts the \emph{real} training tokens, namely the teacher's training tokens, which exceed the student's synthetic training tokens when iso-loss projection is used. We compare with an \emph{idealized} lossless post-training quantization (PTQ) oracle that quantizes weights from 32-bit to 4-bit precision without losing any performance. This approach is deliberately optimistic, as realistic PTQ methods lose accuracy due to quantization error \citep{lin2024awq, frantar2023gptq,tseng2024quip}. We evaluate both the test loss and the PTQ baselines on models trained normally on $D$ real tokens rather than the distilled student models, which sets a more demanding target for the requential bound.

\paragraph{Larger Models Generalize Better with the Same Amount of Data.} Figure~\ref{fig:generalization} (left) evaluates models of various sizes trained for one epoch over 2B tokens. The requential bound \emph{improves} with scale on both datasets mirroring the true test loss, showing that larger models generalize better by achieving lower training loss without disproportionately retaining more information from the training data, which would indicate memorization. Without requential coding, it would be infeasible to certify a nontrivial generalization bound for larger models, e.g., via PTQ.

\paragraph{Compute-Optimal LLMs Generalize Better with Scale.} Figure~\ref{fig:generalization} (middle) sets $D=20N$ \citep{hoffmann2022training}, the compute-optimal scaling regime where the number of tokens $D$ and parameters $N$ grow in fixed proportion. The requential bound improves with scale and outperforms the lossless 4-bit PTQ bound. Notably, \citet{finzi2025compute} set the previous state of the art for non-vacuous bounds on compute-optimal LLMs by quantizing weights with 4-bit GPTQ \citep{frantar2023gptq}, so beating the lossless idealization of this approach marks a substantial improvement. Remarkably, the bound tightens as the model grows. Figure~\ref{fig:generalization} (right) explains why: the code length per parameter $\widehat{L}_{\mathrm{req}}/N$, proportional to the per-token complexity $C$ ignoring lower-order terms, decays as a power law in $N$ on both datasets once the smallest models are excluded. Because the certified gap $C\ln V + (\Sigma + \sqrt{2})\sqrt{C}$ is controlled by $C$ ($\Sigma$ empirically converges to a constant \citep{finzi2025compute}), a decaying $C$ shrinks the gap. If the power law persists, then $C \to 0$ and the certified generalization gap \emph{vanishes} with scale. \citet{finzi2025compute} made a similar extrapolation based on the non-rigorous prequential heuristic, whereas requential coding certifies the same trend with a rigorous code.

It is worth taking a moment to appreciate the significance of this finding. The requential code reveals that compute-optimal scaling not only lowers training loss, but also produces increasingly compressible models, whose code length grows sublinearly in the parameter count. Had the latter failed, the modern scaling paradigm could have stopped yielding gains by running into an irreducible generalization gap where better training performance no longer translates to test performance. 
\clearpage

\subsection{Predicting Overfitting under Data Repetition Due to Memorization}\label{sec:overfitting}
\begin{wrapfigure}[11]{r}{0.34\linewidth}
\centering
\vspace{-1.8em}
\includegraphics[width=\linewidth]{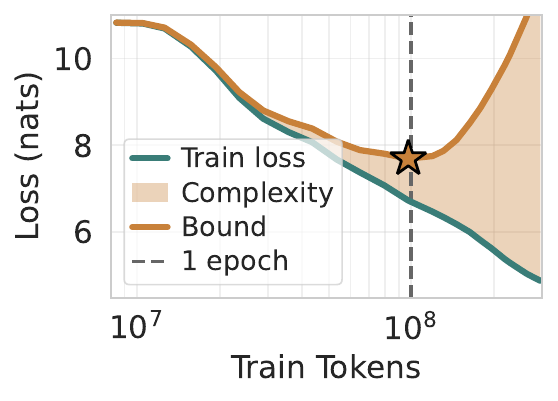}
\caption{\small\textbf{Predicting overfitting.} The bound predicts a gradual divergence between the train and test loss.}
\label{fig:u-shape}
\vspace{-1em}
\end{wrapfigure}
Applied to the multi-epoch regime, the same generalization bound now predicts the training and test loss gradually diverge as the model starts memorizing the training data. 
Figure~\ref{fig:u-shape} shows that the generalization bound produced by requential coding predicts a gradual buildup of overfitting as we train a model for multiple epochs on 100M tokens. The training loss decreases monotonically with more tokens seen, but the complexity penalty (terms involving $C$) rises even faster with data repetition, with the best bound attained around one epoch of training. The gradual information accumulation would have been invisible to parameter-based methods that access only the final parameters.

\subsection{How Much Information Can the Model Learn?}\label{sec:info}
\begin{wrapfigure}{r}{0.34\linewidth}
\centering
\vspace{-1.1em}
\includegraphics[width=\linewidth]{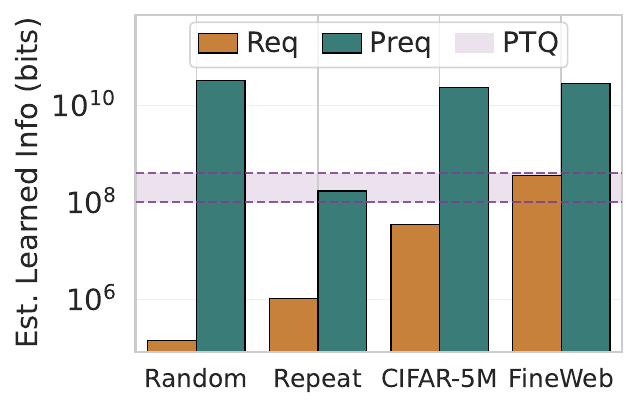}
\caption{\small\textbf{Compressed model size measures learnable information in data.} The requential code consistently separates structure learnable to the model from random information.}
\label{fig:info-barplot}
\vspace{-1.4em}
\end{wrapfigure}
Why is text pre-training uniquely useful for building general-purpose models despite similarly abundant information available in other domains? Answering this question requires separating the amount of information learnable to a model, i.e., epiplexity \citep{finzi2026entropy}, from the total information present in the dataset: uniformly random strings contain high random information (entropy), but a model trained on that dataset would learn very little. Decoupled from both parameter count and data entropy, the compressed model size under requential coding can meaningfully rank datasets by the amount of learnable structure (Figure~\ref{fig:info-barplot}): uniformly random strings and trivially repeating strings have little structure, images have substantial structure, and text has the most structure, in line with our expectations.  Each dataset has 5B tokens and the model is trained for one epoch to avoid memorizing random information.  By contrast, the prequential code is inflated by the data entropy, leading to similar and vacuous estimates for how much the model learns from random strings, images, and text, and quantization to between 1-bit (lower bound) and 4-bit per parameter (empirical bound on information capacity of LLMs \citep{morris2025much}) primarily reflects the model size and is insensitive to the data.

\section{Discussion}
\label{sec:discuss}

We have introduced requential coding, a novel method for compressing generative models. By coding a training process built from self-generated data, the requential code depends on neither the parameter count nor the data entropy, compressing neural networks to far smaller sizes than previously shown, with an advantage that grows with model scale. As a result, its compressed model size proves to be a powerful tool for understanding learning phenomena beyond the reach of prior compression methods.

While a significant step toward revealing how compressible neural networks really are, there are
several open questions that could form the basis for exciting future work. Conceptually, requential coding provides a lossless code for the student, not the teacher. Practically, the explicit dependence on a model's training process makes requential coding a strong compressor, but also necessitates running the teacher-student training to evaluate the code length, unlike parameter-based codes such as quantization that can be applied directly to the trained parameters. Additionally, at this stage, requential coding is primarily a tool to evaluate the compressed model size rather than to transmit the model, as the REC encoder requires runtimes that scale exponentially in the KL divergence.

Moreover, requential coding compresses a model's training process, but does not account for the fact that some of the information is not retained over the course of training. Information learned early may be gradually lost as the model trains on new data, and could in principle be removed from the code. It is a particularly exciting open question to understand whether one could leverage forgotten information to reduce the code length, as the code length presently only grows with training steps and never decreases. We believe addressing this gap holds significant potential for further pushing the limits of model compression and explaining generalization phenomena. Finally, it would be interesting to explore using our bounds to prescribe training practices that improve generalization.

\paragraph{Acknowledgements.}
We thank Yiding Jiang, Pavel Izmailov, Ethan Baron and Eric Elmoznino for helpful discussions. This work was supported by NSF CAREER IIS-2145492, NSF CDS\&E-MSS 2134216, DARPA AIQ
HR00112590066 and Google's TPU Research Cloud (TRC) program. SQ is supported by the Two Sigma PhD Fellowship. Calculations in Appendix~\ref{appx:realized-codelength} were assisted by GPT 5.5.

\bibliography{ref}
\bibliographystyle{plainnat}

\newpage
\appendix

\section*{Appendix Outline}
The appendix is organized as follows. Appendix~\ref{appx:compute_overhead} proves the expected code length bound of Eq.~\eqref{eq:req_codelen} and analyzes the runtimes of code length evaluation, encoding, and decoding, including the tradeoff between code length and encoding time through the REC block size (Figure~\ref{fig:block-size-tradeoff}). Appendix~\ref{appx:realized-codelength} shows that the realized code length concentrates around its cumulative conditional mean, so the reported bound $\widehat L_\mathrm{req}$ certifies the actual transmitted code length with high probability. Appendix~\ref{app:experiment} provides full experiment details, including how code lengths are computed, the datasets, the combined encoder with teacher smoothing and iso-loss projection (Algorithm~\ref{alg:smoothing-projection}), and per-figure configurations. Appendix~\ref{appx:discuss_bound} restates the generalization bound of \citet{finzi2025compute} evaluated in Section~\ref{sec:bounds}.
\section{Code Length and Runtimes}\label{appx:proof}\label{appx:compute_overhead}
\subsection{Expected Code Length}
\paragraph{Universal Integer Code for the Index.}
At step $t$, the current student $P_t$ defines the public proposal sequence, the encoder uses the teacher $Q_t$ to select a proposal, and the decoder recovers the selected sample $X_t$ from the prefix-free message $m_t$. Let $\mathcal F_{t-1}$ be the history before this REC call, and let $J_t$ denote the selected proposal index. The original PFR and ORC analyses obtain their sharpest entropy bound by comparing the selected index with a Zipf distribution whose exponent is tuned based on $\mathrm{KL}(Q_t\|P_t)$ \citep{li2018strong,theis2022algorithms}. In our setup, the decoder does not have access to the teacher $Q_t$ and therefore cannot generally compute the tuned exponent. We instead encode $J_t$ with the parameter-free Elias delta universal integer code, which adds negligible overhead while avoiding any need for the decoder to know or estimate the teacher-student KL. For every integer $j\geq1$, its code length is
\begin{equation}
    \ell_\Delta(j)
    =
    \left\lfloor\log j\right\rfloor
    +
    2\left\lfloor
      \log\!\left(\left\lfloor\log j\right\rfloor+1\right)
    \right\rfloor
    +1,
    \label{eq:elias-delta-length}
\end{equation}
and therefore
\begin{equation}
    \ell_\Delta(j)
    \leq
    \log j
    +2\log(1+\log j)
    +1.
    \label{eq:elias-pointwise-upper}
\end{equation}
Define
\begin{align}
    \beta
    &=
    e^{-1}\log e+1,
    \\
    \kappa
    &=
    1+\beta+2\log(1+\beta)
    <5.21.
    \label{eq:universal-code-constants}
\end{align}

The log-index bound for the PFR \citep{li2018strong}, which carries over to ORC \citep{theis2022algorithms}, gives
\begin{equation}
    \mathbb E[\log J_t\mid\mathcal F_{t-1}]
    \leq
    \mathrm{KL}(Q_t\|P_t)+\beta.
    \label{eq:pfr-expected-log-index}
\end{equation}

\begin{proposition}[Conditional mean REC message length]
\label{prop:conditional-mean-rec-message}
If the selected index $J_t$ is encoded with the Elias delta code, so that $\ell_t=\ell_\Delta(J_t)$, then
\begin{equation}\label{eq:orc_block}
\mathbb{E}[\ell_t\mid\mathcal F_{t-1}]
\le
\mathrm{KL}(Q_t\|P_t)
+2\log\!\left(1+\mathrm{KL}(Q_t\|P_t)\right)
+\kappa
\end{equation}
for every REC call $t$.
\end{proposition}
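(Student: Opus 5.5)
The plan is to combine the pointwise Elias delta bound \eqref{eq:elias-pointwise-upper} with the expected log-index bound \eqref{eq:pfr-expected-log-index}, using Jensen's inequality to handle the nonlinear $\log(1+\log j)$ term. Write $K=\mathrm{KL}(Q_t\|P_t)$, which is $\mathcal F_{t-1}$-measurable since $P_t$ and $Q_t$ are. Set $Z_t=\log J_t\ge 0$; this holds because $J_t\ge1$ under the Elias delta convention, with a shift by one if REC indexes from $0$, and I would absorb such a shift into the PFR bound as stated. Applying \eqref{eq:elias-pointwise-upper} pointwise and taking conditional expectations gives
\begin{equation*}
\mathbb E[\ell_t\mid\mathcal F_{t-1}]\le \mathbb E[Z_t\mid\mathcal F_{t-1}]+2\,\mathbb E[\log(1+Z_t)\mid\mathcal F_{t-1}]+1.
\end{equation*}

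The map $z\mapsto\log(1+z)$ is concave and nondecreasing on $[0,\infty)$. Conditional Jensen followed by \eqref{eq:pfr-expected-log-index} therefore yields
\begin{equation*}
\mathbb E[\log(1+Z_t)\mid\mathcal F_{t-1}]\le\log\bigl(1+\mathbb E[Z_t\mid\mathcal F_{t-1}]\bigr)\le \log(1+K+\beta).
\end{equation*}
The linear term is bounded directly by $K+\beta$. Putting these together gives
\begin{equation*}
\mathbb E[\ell_t\mid\mathcal F_{t-1}]\le K+\beta+1+2\log(1+K+\beta).
\end{equation*}

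The last step peels $\beta$ out of the logarithm. Since $K\ge0$ and $\beta\ge0$, we have $1+K+\beta\le(1+K)(1+\beta)$, so
\begin{equation*}
\log(1+K+\beta)\le\log(1+K)+\log(1+\beta).
\end{equation*}
This gives the total $K+2\log(1+K)+1+\beta+2\log(1+\beta)=K+2\log(1+K)+\kappa$, matching the definition \eqref{eq:universal-code-constants}. For the numerical claim: $\beta=e^{-1}\log_2 e+1\approx1.531$, so $2\log_2(2.531)\approx2.68$ and $\kappa\approx5.21$. This needs careful rounding to confirm it stays below $5.21$.

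The main obstacle is not the algebra but justifying \eqref{eq:pfr-expected-log-index} conditionally, with $P_t,Q_t$ random. The REC call at step $t$ uses fresh shared randomness $S_t$, derived from the seed and $t$ and independent of $\mathcal F_{t-1}$ in the idealized model. So conditioning on $\mathcal F_{t-1}$ freezes $(P_t,Q_t)$, and the unconditional PFR bound applies to the conditional law. I would state this independence explicitly and take \eqref{eq:pfr-expected-log-index} as given, as the paper allows.
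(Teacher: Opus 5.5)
Your proposal is correct and matches the paper's proof step for step: the pointwise Elias delta bound, conditional Jensen on the concave $\log(1+\cdot)$, monotonicity combined with the PFR log-index bound, and the factorization $1+K+\beta\le(1+K)(1+\beta)$. Your numerical check also works out, since $\beta\approx 1.5307$ gives $\kappa\approx 5.2099<5.21$.
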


\begin{proof}
Let
\begin{equation}
    A_t
    =
    \mathbb E[\log J_t\mid\mathcal F_{t-1}].
\end{equation}
By the pointwise Elias delta bound in Eq.~\eqref{eq:elias-pointwise-upper} and Jensen's inequality,
\begin{align}
    \mathbb E[\ell_t\mid\mathcal F_{t-1}]
    &\leq
    A_t+
    2\log(1+A_t)
    +1.
\end{align}
Since $u+2\log(1+u)+1$ is increasing for $u\geq0$, combining this with Eq.~\eqref{eq:pfr-expected-log-index} gives
\begin{align}
    \mathbb E[\ell_t\mid\mathcal F_{t-1}]
    &\leq
    \mathrm{KL}(Q_t\|P_t)
    +\beta
    +2\log\!\left(1+\mathrm{KL}(Q_t\|P_t)+\beta\right)
    +1
    \\
    &\leq
    \mathrm{KL}(Q_t\|P_t)
    +2\log\!\left(1+\mathrm{KL}(Q_t\|P_t)\right)
    +1+\beta+2\log(1+\beta),
\end{align}
where the final step uses
\begin{equation}
    1+\mathrm{KL}(Q_t\|P_t)+\beta
    \leq
    \left(1+\mathrm{KL}(Q_t\|P_t)\right)(1+\beta).
\end{equation}
The definition of $\kappa$ proves Eq.~\eqref{eq:orc_block}.
\end{proof}

More efficient choices such as the log-star universal integer codes \citep{rissanen1983universal} can further reduce the logarithmic overhead, but this term is already negligible in our large-batch applications compared with the leading KL term.

\begin{theorem}[Conditional mean code length]
\label{thm:conditional-mean-codelength}
Define the cumulative one-step conditional mean code length
\begin{equation}
\overline{L}_{\mathrm{req}}
:=
\sum_{t=0}^{T-1}\mathbb{E}[\ell_t\mid\mathcal F_{t-1}].
\end{equation}
If each selected index is encoded as in Proposition~\ref{prop:conditional-mean-rec-message}, then
\begin{equation}
\overline{L}_{\mathrm{req}}
\le
\sum_{t=0}^{T-1}
\left[
\mathrm{KL}(Q_t\|P_t)
+2\log\!\left(1+\mathrm{KL}(Q_t\|P_t)\right)
+\kappa
\right].
\end{equation}
\end{theorem}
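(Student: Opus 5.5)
The theorem follows by summing the per-call bound of Proposition~\ref{prop:conditional-mean-rec-message} over the $T$ REC calls. The only real work is checking that the proposition applies at every step $t$ with the right conditioning, given that $P_t$ and $Q_t$ are themselves random and depend on earlier messages. The plan is as follows.

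First, fix $t\in\{0,\dots,T-1\}$ and condition on $\mathcal F_{t-1}$. This history determines the current student $P_t$, since $P_t$ is a deterministic function of $P_0$, $G$, and the previously decoded samples $X_0,\dots,X_{t-1}$. It also determines the teacher $Q_t$. By construction of the protocol, the shared randomness $S_t=\mathrm{PRNG}(s,t,\cdot)$ used at step $t$ is keyed by $t$ and therefore fresh relative to $\mathcal F_{t-1}$. So, conditionally on $\mathcal F_{t-1}$, the proposals $Y_t^{(0)},Y_t^{(1)},\dots$ are i.i.d.\ from $P_t$, with $P_t$ and $Q_t$ now fixed distributions. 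This is exactly the setting in which the log-index bound Eq.~\eqref{eq:pfr-expected-log-index} holds. Hence Proposition~\ref{prop:conditional-mean-rec-message} yields, almost surely,
\begin{equation*}
\mathbb E[\ell_t\mid\mathcal F_{t-1}]\le \mathrm{KL}(Q_t\|P_t)+2\log\!\left(1+\mathrm{KL}(Q_t\|P_t)\right)+\kappa .
\end{equation*}

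Second, sum this inequality over $t=0,\dots,T-1$. Each term on the right is $\mathcal F_{t-1}$-measurable, so the sum of the right-hand sides is a well-defined random quantity. By definition, the sum of the left-hand sides is $\overline{L}_{\mathrm{req}}$, which gives the claim. The inequality holds pathwise, with no expectation over the training trajectory needed. Taking a further outer expectation would also bound $\mathbb E[\sum_t\ell_t]$ by the expected right-hand side, via the tower property, but the theorem does not ask for this.

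The main obstacle is the first step. One must justify that the conditional law of the step-$t$ proposals given $\mathcal F_{t-1}$ is i.i.d.\ $P_t$ even though $P_t$ is random, because the PFR/ORC analysis behind Eq.~\eqref{eq:pfr-expected-log-index} is stated for fixed $P,Q$. I would handle this by modeling the PRNG streams $(S_t)_t$ as independent ideal randomness across $t$. Then $S_t$ is independent of $\mathcal F_{t-1}$, so the fixed-$(P,Q)$ bound applies conditionally through a regular conditional distribution. This also requires the teacher $Q_t$ not to depend on $S_t$, which holds because $\mathcal F_{t-1}$ is assumed to contain $Q_t$.
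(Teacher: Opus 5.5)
Your proposal is correct and matches the paper's proof, which applies Proposition~\ref{prop:conditional-mean-rec-message} at each call $t$ and sums the resulting inequalities over $t=0,\ldots,T-1$. Your extra argument for why the fixed-$(P,Q)$ log-index bound holds conditionally on $\mathcal F_{t-1}$ spells out something the paper leaves implicit by stating Eq.~\eqref{eq:pfr-expected-log-index} directly in conditional form: $S_t$ is fresh randomness independent of $\mathcal F_{t-1}$, and $P_t$ and $Q_t$ are $\mathcal F_{t-1}$-measurable.
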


\begin{proof}
Proposition~\ref{prop:conditional-mean-rec-message} gives the claimed upper bound on $\mathbb{E}[\ell_t\mid\mathcal F_{t-1}]$ for every REC call $t$. Summing these inequalities over $t=0,\ldots,T-1$ proves the result.
\end{proof}
Appendix~\ref{appx:realized-codelength} relates this cumulative conditional mean to the realized code length.

\subsection{Runtime}
We normalize runtime by one ordinary training run on the same $D$ tokens and count the FLOPs for a backward pass as two forward passes. We use $B$ to denote the batch size measured in tokens.
\paragraph{Code length evaluation.}
Code length evaluation does not require actually transmitting the model. As in the main text, we replace REC encoding and decoding by the equivalent stochastic process that samples each $X_t$ directly from the teacher $Q_t$. If the teacher shares the student's architecture and advances by training on real data with the same batch size, each step consists of one teacher forward pass to sample $X_t$, one student forward-backward pass on $X_t$, and one teacher forward-backward pass on real data. Thus
\begin{equation}
\frac{T_{\mathrm{eval}}}{T_{\mathrm{train}}}
\approx
\frac{3D+D+3D}{3D}
=
\frac{7}{3}.
\end{equation}
If the teacher checkpoints are already available, the only extra compute beyond ordinary student training is the teacher forward pass used to sample $X_t$, giving a $0.33\times$ overhead.

\paragraph{Encoding.} Actually transmitting a model is more expensive because the encoder needs to draw many proposals from $P_t$. Rejection sampling draws a number of proposals that scales with the worst-case likelihood ratio and achieves suboptimal code length. An exact REC construction achieving the $\mathrm{KL}(Q_t\|P_t) + \log(1+\mathrm{KL}(Q_t\|P_t)) + O(1)$ expected code length, like PFR \citep{li2018strong}, uses an unbounded number of proposals. Approximate methods like ORC draw about $2^{\mathrm{KL}(Q_t\|P_t) + \rho}$ proposals at step $t$, where the parameter $\rho>0$ controls the approximation of the ORC sampling distribution to $Q_t$. We illustrate the runtime for ORC. For each proposal, the student sampling forward pass produces the proposal and its log probability under $P_t$, while the encoder also needs a teacher forward pass to evaluate its log probability under $Q_t$. Treating one proposal as two forward passes over $B$ tokens gives
\begin{equation}
\frac{T_{\mathrm{enc}}}{T_{\mathrm{train}}}
\approx
2+
\frac{2B}{3D}
\sum_{t=0}^{T-1}
2^{\mathrm{KL}(Q_t\|P_t)+\rho}.
\end{equation}
The leading $2$ accounts for the teacher and student training work done during encoding; the summation is the ORC search cost. 

\paragraph{Decoding.} The decoding time, in contrast, is unaffected by the block size. The decoder never evaluates the teacher; it regenerates each selected $X_t$ from the current student and shared randomness, then updates the student. For an autoregressive model, the sampling forward pass can be reused for the backward update, giving
\begin{equation}
\frac{T_{\mathrm{dec}}}{T_{\mathrm{train}}}\approx 1.
\end{equation}
If the sampling pass is not reused, the decoding cost is closer to $4/3$ ordinary training.

\paragraph{Block Size Tradeoff.}
When encoding time matters, we can accept a longer code in exchange for a shorter encoding time by dividing each batch into smaller blocks and transmitting one block at a time. We plot this trade-off in Figure~\ref{fig:block-size-tradeoff}.
Changing the REC block size does not change the algorithm; it only changes what one step's sample $X_t$ denotes. Suppose the logged run uses batch size $B$, and define the per-sample KL
\begin{equation}
\epsilon_t
:=
\frac{1}{B}\mathrm{KL}(Q_t\|P_t).
\end{equation}
For a candidate REC block size $1\leq B'\leq B$, we estimate the code length by splitting each original batch into blocks of size $B'$ and treating each such block as having KL $B'\epsilon_t$. Ignoring rounding when $B'$ does not divide $B$, this gives
\begin{equation}
\widehat{L}_{\mathrm{req}}(B')
=
\sum_{t=0}^{T-1}
\left[
B\epsilon_t
+
\frac{B}{B'}
\left(
2\log(1+B'\epsilon_t)
+\kappa
\right)
\right].
\end{equation}
Thus the leading cumulative KL is fixed, while the logarithmic and constant-$\kappa$ universal-code overheads increase with the number of REC messages, and hence decrease with $B'$. To make Figure~\ref{fig:block-size-tradeoff}, we reuse one 104M-parameter FineWeb run with teacher smoothing and no iso-loss projection with $B=524288$ tokens, and apply this rescaling for each plotted $B'$. The proposal count is set to $2^{B'\epsilon_t}$ with the finite-candidate slack $\rho$ set to 0, sufficient to illustrate the exponential scaling.

\begin{figure}[H]
\centering
\begin{minipage}[c]{0.42\linewidth}
\centering
\includegraphics[width=\linewidth]{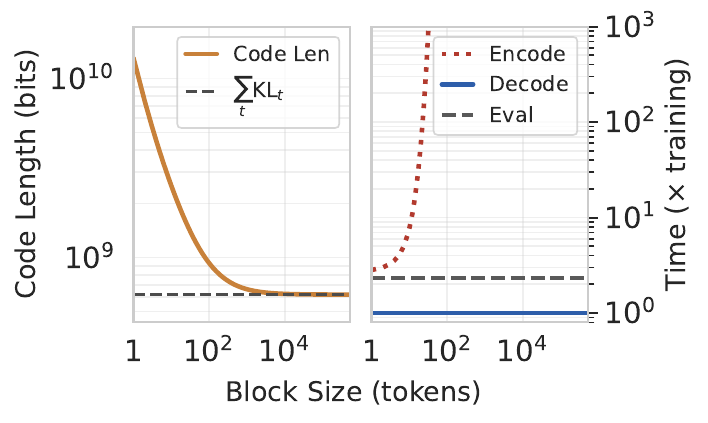}
\end{minipage}\hfill
\begin{minipage}[c]{0.54\linewidth}
\caption{\small\textbf{Trading code length against encoding time.} Shown for a 104M-parameter model trained to the Chinchilla budget on FineWeb. (\textbf{Left}) The code length drops to its floor as the per-block overhead is amortized. (\textbf{Right}) Actual decoding is close to ordinary training, evaluation without REC search costs $2.33\times$ ordinary training, and encoding time grows exponentially with block size. Time is normalized to ordinary training on the same tokens.}
\label{fig:block-size-tradeoff}
\end{minipage}
\end{figure}

\section{Bounding the Realized Requential Codelength}
\label{appx:realized-codelength}

Appendix~\ref{appx:compute_overhead} proves the one-step conditional mean bound used for expected codelengths. A compression certificate, however, depends on the realized prefix-free code length. In addition, the samples selected by REC determine subsequent training updates, so the teacher-student trajectory and all later coding distributions are themselves random. This section separates these effects. We first compare the realized codelength with the sum of its one-step conditional means. We then bound the conditional variance of one message for the Poisson functional representation (PFR) code of \citet{li2018strong}, the infinite-candidate limit of ordered random coding (ORC) \citep{theis2022algorithms}. Thus, the bound proved in this section certifies the variance only in the limit as we draw an arbitrarily large number of proposals in ORC. This is impractical for actual encoding, but remains theoretically sound for our purpose: we establish that there exists a code with the claimed mean and variance, which is what is relevant for certifying compressibility and generalization bounds, even though \emph{finding} that code (encoding) can be computationally impractical.

We index REC calls by $t=0,\ldots,T-1$. Let $\mathcal F_{t-1}$ contain the complete requential history immediately before call $t$, including the current teacher and student distributions. For $t=0$, $\mathcal F_{-1}$ denotes the initial shared state.

\subsection{A martingale decomposition of the realized codelength}

Let $m_t$ be the prefix-free message sent at call $t$, let
\begin{equation}
    \ell_t
    =
    |m_t|
\end{equation}
be its realized length, and define
\begin{equation}
    \overline\ell_t
    =
    \mathbb E[\ell_t\mid\mathcal F_{t-1}].
\end{equation}
The realized codelength and its cumulative conditional mean are
\begin{equation}
    L
    =
    \sum_{t=0}^{T-1}\ell_t,
    \qquad
    \overline L
    =
    \sum_{t=0}^{T-1}\overline\ell_t.
\end{equation}
The quantity $\overline L$ is random because it is evaluated along the realized teacher-student trajectory.

\begin{proposition}[Cumulative coding fluctuation]
\label{prop:cumulative-coding-fluctuation}
The realized codelength satisfies
\begin{align}
    \mathbb E[L-\overline L]
    &=
    0,
    \\
    \mathbb E\left[
      (L-\overline L)^2
    \right]
    &=
    \sum_{t=0}^{T-1}
    \mathbb E\left[
      \operatorname{Var}(\ell_t\mid\mathcal F_{t-1})
    \right].
    \label{eq:martingale-variance-identity}
\end{align}
\end{proposition}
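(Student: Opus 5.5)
The plan is to write $L-\overline L$ as a sum of martingale differences and use their orthogonality. Define
\begin{equation*}
    D_t = \ell_t - \overline\ell_t = \ell_t - \mathbb E[\ell_t\mid\mathcal F_{t-1}],
\end{equation*}
so that $L-\overline L=\sum_{t=0}^{T-1}D_t$.

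\textbf{Adaptedness and integrability.} The history $\mathcal F_t$ at call $t+1$ contains the message $m_t$ and the sample $X_t$. It therefore contains $\ell_t$. It also contains $\overline\ell_t$, which is $\mathcal F_{t-1}$-measurable and $\mathcal F_{t-1}\subseteq\mathcal F_t$. Hence $D_t$ is $\mathcal F_t$-measurable and $\mathbb E[D_t\mid\mathcal F_{t-1}]=0$.

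For integrability we need $\mathbb E[\ell_t^2]<\infty$. I would justify this from the PFR structure, for example via the conditional variance bound for one message proved later in this appendix. Alternatively, the identity can be stated as holding in $[0,\infty]$ whenever the right-hand side is finite.

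\textbf{First claim.} The tower property gives $\mathbb E[D_t]=\mathbb E\big[\mathbb E[D_t\mid\mathcal F_{t-1}]\big]=0$. Summing over $t$ gives $\mathbb E[L-\overline L]=0$.

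\textbf{Second claim.} Expand the square:
\begin{equation*}
    (L-\overline L)^2=\sum_t D_t^2+2\sum_{s<t}D_sD_t .
\end{equation*}
For $s<t$, $D_s$ is $\mathcal F_{t-1}$-measurable. Conditioning on $\mathcal F_{t-1}$ therefore gives
\begin{equation*}
    \mathbb E[D_sD_t]=\mathbb E\big[D_s\,\mathbb E[D_t\mid\mathcal F_{t-1}]\big]=0 .
\end{equation*}
The product $D_sD_t$ is integrable by Cauchy--Schwarz. This leaves
\begin{equation*}
    \mathbb E[(L-\overline L)^2]=\sum_t\mathbb E[D_t^2]=\sum_t\mathbb E\big[\mathbb E[D_t^2\mid\mathcal F_{t-1}]\big],
\end{equation*}
and $\mathbb E[D_t^2\mid\mathcal F_{t-1}]=\operatorname{Var}(\ell_t\mid\mathcal F_{t-1})$ by definition. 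This is Eq.~\eqref{eq:martingale-variance-identity}.

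\textbf{Main obstacle.} The only delicate step is measure-theoretic bookkeeping rather than computation. First, $\mathcal F_t$ must genuinely contain $\ell_t$. This holds because the history includes the messages, and in any case it includes $J_t$, from which $\ell_t=\ell_\Delta(J_t)$ follows. Second, the second moments must be finite so that the cross terms vanish. I would handle the latter first by truncation: apply the argument to $\min(\ell_t,K)$ with its own conditional mean, then pass $K\to\infty$ by monotone convergence on the diagonal terms, which are the only ones that survive.
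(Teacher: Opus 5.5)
Your proof is correct and is the same argument the paper uses: you write $L-\overline L$ as a sum of the martingale differences $\ell_t-\overline\ell_t$, kill the cross terms by conditioning on $\mathcal F_{t-1}$, and identify the diagonal terms with $\mathbb E[\operatorname{Var}(\ell_t\mid\mathcal F_{t-1})]$. Your integrability bookkeeping goes beyond the paper, which omits it, but the truncation detour is unnecessary: if the right-hand side is finite, then $\mathbb E[(\ell_t-\overline\ell_t)^2]=\mathbb E[\operatorname{Var}(\ell_t\mid\mathcal F_{t-1})]<\infty$ by the tower property for nonnegative variables, which is exactly the square-integrability your Cauchy--Schwarz step needs.
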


\begin{proof}
The differences
\begin{equation}
    Z_t
    =
    \ell_t-\overline\ell_t
\end{equation}
form a martingale-difference sequence. Hence their cross terms have zero expectation, and
\begin{equation}
    \mathbb E\left[
      \left(\sum_{t=0}^{T-1}Z_t\right)^2
    \right]
    =
    \sum_{t=0}^{T-1}\mathbb E[Z_t^2]
    =
    \sum_{t=0}^{T-1}
    \mathbb E\left[
      \operatorname{Var}(\ell_t\mid\mathcal F_{t-1})
    \right].
\end{equation}
\end{proof}

\paragraph{Interpretation.}
Equation~\eqref{eq:martingale-variance-identity} averages over all randomness, including both the random REC messages and the random teacher-student trajectory they induce. It says that the expected squared difference between the actual code length and the cumulative one-step conditional mean is controlled by the accumulated conditional variance. Thus, whenever the accumulated variance grows more slowly (typically $O(T)$) than the square of the accumulated mean (typically $O(T^2)$), the relative coding fluctuation is small.

This result does not show that $\overline L$ is close to the scalar $\mathbb E[L]$ on any particular run, which is not what we are interested in showing. The random REC samples alter subsequent training updates and therefore alter future conditional means. Controlling variation of $\overline L$ across runs would require a separate stability or concentration result for the training trajectory that is not relevant to our investigation.

\subsection{Variance of one PFR message}

We now bound the conditional variance appearing in Eq.~\eqref{eq:martingale-variance-identity}. Fix discrete distributions $P$ and $Q$ on a countable alphabet $\mathcal X$, and assume $Q(x)>0$ only if $P(x)>0$. Define the information density
\begin{equation}
    I(x)
    =
    \log\frac{Q(x)}{P(x)},
    \qquad
    I_+(x)
    =
    \max\{I(x),0\}.
\end{equation}

The PFR construction draws independent proposals $\widetilde X_1,\widetilde X_2,\ldots\sim P$ and lets $0<T_1<T_2<\cdots$ be the arrival times of a unit-rate Poisson process. It selects
\begin{equation}
    J
    =
    \arg\min_{i\geq1}
    T_i\frac{P(\widetilde X_i)}{Q(\widetilde X_i)},
    \qquad
    X
    =
    \widetilde X_J.
    \label{eq:pfr-construction}
\end{equation}
Then $X\sim Q$ \citep{li2018strong}.

\begin{lemma}[Variance of the PFR log-index]
\label{lem:pfr-log-index-variance}
Let $J$ and $X$ be generated by Eq.~\eqref{eq:pfr-construction}, and define
\begin{equation}
    \gamma
    =
    1+\frac{2}{\ln 2}
    <4.
\end{equation}
Then
\begin{align}
    \mathbb E\left[
      \left(\log J-I_+(X)\right)^2
    \right]
    &\leq
    \gamma^2,
    \label{eq:log-index-residual}
    \\
    \sqrt{\operatorname{Var}(\log J)}
    &\leq
    \sqrt{\operatorname{Var}_{X\sim Q}(I(X))}
    +\gamma.
    \label{eq:log-index-variance}
\end{align}
The same bounds hold conditionally when $P$ and $Q$ are determined by the preceding history.
\end{lemma}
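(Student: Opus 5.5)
The plan is to condition on the selected symbol and on the value of the minimum, and to show that, given these, $J-1$ is Poisson with a rate pinned to within a constant factor of $W\,2^{I(X)}$. The residual $\log J-I_+(X)$ is then controlled by the fluctuations of $\log W$ and of a Poisson count. Writing $W=\min_i T_i P(\widetilde X_i)/Q(\widetilde X_i)$, I would use the standard PFR fact from \citet{li2018strong}: the marked points $(T_i,\widetilde X_i)$ form a Poisson process on $\mathbb R_+\times\mathcal X$ with intensity $dt\,P(dx)$. The transformed points $T_iP(\widetilde X_i)/Q(\widetilde X_i)$ then have intensity $Q(dx)\,dw$ on the support of $Q$, so $W\sim\mathrm{Exp}(1)$ independently of $X\sim Q$.

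Conditional on $W=w$ and $X=x$, the selected arrival time is $T_J=w\,2^{I(x)}$. The earlier arrivals are the points of the process in $[0,T_J)$ whose ratio exceeds $w$, which is a Poisson count $N=J-1$ with mean
\begin{equation}
\lambda=\int_0^{T_J}P\bigl(\{x':\, tP(x')/Q(x')>w\}\bigr)\,dt .
\end{equation}
This rate satisfies two bounds. From above, trivially $\lambda\le T_J=w2^{I(x)}$. From below, the complementary integral is at most $\int_0^\infty P(Q/P\ge t/w)\,dt=w\,\mathbb E_P[Q/P]\le w$, so $\lambda\ge w(2^{I(x)}-1)$. Hence $\log J=\log(1+N)$ with $N\mid(W,X)\sim\mathrm{Poisson}(\lambda)$ and $\lambda\in[w(2^{I}-1),\,w2^{I}]$.

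Next I would split the residual $R=\log J-I_+(X)$ into its positive and negative parts and bound $\mathbb E[R_+^2]$ and $\mathbb E[R_-^2]$ separately. For the upper part, $1+N\le$ something with conditional mean $1+\lambda\le (1+w)2^{I_+}$. Using concavity of $\log$ together with $\mathbb E[(1+N)^2]$ for a Poisson count, I expect $\mathbb E[R_+^2\mid W,X]\lesssim \log^2(1+w)+O(1)$. For the lower part, I would write $R_-=\log_+\bigl(2^{I_+}/(1+N)\bigr)$ and use $\mathbb E[1/(1+N)]=(1-e^{-\lambda})/\lambda\le\min(1,1/\lambda)$. This gives $2^{I_+}/(1+N)$ a conditional mean of order $1+1/w$, and hence a bound on $\mathbb E[R_-^2]$ in terms of moments of $\log(1/w)$; that case is trivial when $I\le 1$, since then $R_-\le 1$.

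Integrating against $W\sim\mathrm{Exp}(1)$ turns these into explicit constants, because $\mathbb E[\ln W]$ and $\mathbb E[\ln^2 W]$ are finite. The target is $\|R\|_2\le 1+2/\ln 2$, obtained by Minkowski over the pieces: the $1$ comes from the integer rounding $\log(1+N)$ versus the rate, and the $2/\ln 2$ from the $\ln W$ and Poisson-tail contributions converted to bits. This gives Eq.~\eqref{eq:log-index-residual}.

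For Eq.~\eqref{eq:log-index-variance}, I would write $\log J=I_+(X)+R$ and apply Minkowski to the centered variables, giving $\sqrt{\operatorname{Var}(\log J)}\le\sqrt{\operatorname{Var}(I_+(X))}+\sqrt{\mathbb E R^2}$. Since $u\mapsto\max(u,0)$ is $1$-Lipschitz, $\operatorname{Var}(I_+(X))\le\operatorname{Var}_Q(I(X))$, which finishes the bound. The conditional statement follows by running the same argument with $P=P_t$ and $Q=Q_t$ fixed given $\mathcal F_{t-1}$, since the REC randomness at call $t$ is fresh and independent of the history.

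The main obstacle is the lower-deviation term $\mathbb E[R_-^2]$, because the Poisson rate $\lambda$ can be small exactly when $W$ is small. I would handle it first via the $\mathbb E[1/(1+N)]$ identity combined with Jensen on $\log_+^2$, which is concave enough on the relevant range. Only if the constants came out too loose would I fall back on an explicit Poisson lower-tail bound.
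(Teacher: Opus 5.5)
Your setup is correct and matches the paper's starting point. Your $W$ is the paper's $\Lambda$, and your rate is exactly $\lambda=w\,a(x)$ with $a(x)=\sum_y P(y)\bigl(R(x)-R(y)\bigr)_+$, $R=Q/P$. Your two-sided bounds on $\lambda$ are the paper's $(R(x)-1)_+\le a(x)\le R(x)$.

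The gap is that the content of the lemma, an $L^2$ bound with the explicit constant $\gamma$, is never derived. Your bounds on $\mathbb E[R_+^2]$ and $\mathbb E[R_-^2]$ are stated as expectations (``$\lesssim\log^2(1+w)+O(1)$''), and $1+2/\ln 2$ is only announced as a target. Its attribution (the $1$ from integer rounding, the $2/\ln 2$ from $\ln W$) corresponds to no inequality you set up.

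The one concrete tool you name for the lower deviation also fails as stated. The map $y\mapsto\log_+^2 y$ is convex on $[1,e]$, so Jensen does not apply. For instance, if $I(x)>0$ and $w>2^{I(x)}/(2^{I(x)}-1)$, then $\mathbb E[Y\mid w,x]\le 2^{I}/\lambda<1$ for $Y=2^{I_+(x)}/(1+N)$. Direct Jensen would then give $\mathbb E[R_-^2\mid w,x]=0$, yet $R_-=I(x)>0$ on the event $N=0$, which has probability $e^{-\lambda}>0$.

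Your route can be rescued, but the rescue is the proof and it is absent. Replace $\log_+^2$ by its least concave majorant: $\frac{4y}{e^2\ln^2 2}$ on $[0,e^2]$ and $\log^2 y$ beyond. Bound $Y\le(1+\lambda/w)/(1+N)$, whose conditional mean is at most $1+1/w$. Use $R_+\le\log_+\frac{1+N}{1+\lambda}+\log(1+w)$. Integrating over $W\sim\mathrm{Exp}(1)$ then gives $\mathbb E[R^2]$ of roughly $10<\gamma^2\approx 15.1$.

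The paper's route is much shorter: integrate out $W$ before taking logarithms. Mixing $\operatorname{Poisson}(W a(x))$ over $W\sim\mathrm{Exp}(1)$ makes $J\mid X=x$ geometric with mean $M(x)=1+a(x)$. For such $J$, $\Pr(J\le Me^{-u})\le e^{-u}$ and $\Pr(J\ge Me^{u})\le e^{1-e^u}\le e^{-u}$. Hence $\mathbb E[\ln^2(J/M)]\le 4$, i.e.\ $\|\log J-\log M(X)\|_2\le 2/\ln 2$. Your rate bounds give $0\le\log M(X)-I_+(X)\le 1$, and Minkowski yields exactly $\gamma$.

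So the $1$ is the $+1$ in the geometric mean, not integer rounding. The $2/\ln 2$ is the geometric log-tail, which already absorbs the $\ln W$ fluctuation. The small-$W$ lower-deviation obstacle you flag never arises. Your passage from the residual bound to the variance inequality (Minkowski plus $1$-Lipschitzness of $u\mapsto u_+$) and the conditional version match the paper.
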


\begin{proof}
Write
\begin{equation}
    R(x)
    =
    \frac{Q(x)}{P(x)},
\end{equation}
and define the analytical quantity
\begin{equation}
    a(x)
    =
    \sum_{y\in\mathcal X}
    P(y)\left(R(x)-R(y)\right)_+.
    \label{eq:analytical-a}
\end{equation}
The encoder does not evaluate $a(x)$; it is used only to characterize the PFR index. The conditional Poisson calculation in the proof of the strong functional representation lemma gives
\begin{equation}
    J-1
    \mid
    X=x,\Lambda=\lambda
    \sim
    \operatorname{Poisson}(\lambda\, a(x)),
    \qquad
    \Lambda\sim\operatorname{Exp}(1),
\end{equation}
where $\Lambda$ is independent of $X$. Integrating over $\Lambda$ gives
\begin{equation}
    \Pr(J=j\mid X=x)
    =
    \frac{a(x)^{j-1}}{(1+a(x))^j},
    \qquad
    j=1,2,\ldots.
    \label{eq:conditional-geometric}
\end{equation}
Thus $J\mid X=x$ is geometric on $\{1,2,\ldots\}$ with conditional mean
\begin{equation}
    M(x)
    =
    1+a(x).
\end{equation}

For a geometric random variable $J$ with mean $M\geq1$, elementary tail bounds give
\begin{equation}
    \mathbb E\left[
      \left(\log J-\log M\right)^2
    \right]
    \leq
    \frac{4}{(\ln 2)^2}.
    \label{eq:geometric-log-residual}
\end{equation}
Indeed, for $Z=\ln(J/M)$ and every $u\geq0$, one has $\Pr(Z\leq-u)\leq e^{-u}$ and $\Pr(Z\geq u)\leq e^{1-e^u}\leq e^{-u}$; integrating these tails yields Eq.~\eqref{eq:geometric-log-residual}.

Since $\sum_yP(y)R(y)=1$, Jensen's inequality and the elementary upper bound $(R(x)-R(y))_+\leq R(x)$ give
\begin{equation}
    (R(x)-1)_+
    \leq
    a(x)
    \leq
    R(x).
\end{equation}
Consequently,
\begin{equation}
    I_+(x)
    \leq
    \log M(x)
    \leq
    I_+(x)+1.
    \label{eq:mean-index-information-comparison}
\end{equation}
Combining Eqs.~\eqref{eq:geometric-log-residual} and \eqref{eq:mean-index-information-comparison} with Minkowski's inequality proves Eq.~\eqref{eq:log-index-residual}.

Finally, write $W=\log J-I_+(X)$. Equation~\eqref{eq:log-index-residual} gives $\mathbb E[W^2]\leq\gamma^2$, and therefore
\begin{align}
    \sqrt{\operatorname{Var}(\log J)}
    &\leq
    \sqrt{\operatorname{Var}(I_+(X))}
    +
    \sqrt{\operatorname{Var}(W)}
    \\
    &\leq
    \sqrt{\operatorname{Var}(I(X))}
    +\gamma.
\end{align}
The final inequality follows because $x\mapsto x_+$ is $1$-Lipschitz and hence cannot increase the variance of a scalar random variable.
\end{proof}

\subsection{Variance of one universally coded PFR message}

We use the same Elias delta code $\ell_\Delta$ from Eq.~\eqref{eq:elias-delta-length}.
\begin{lemma}[Variance of one universally coded PFR message]
\label{lem:pfr-message-variance}
Let
\begin{equation}
    \ell
    =
    \ell_\Delta(J).
\end{equation}
Then
\begin{equation}
    \sqrt{\operatorname{Var}(\ell)}
    \leq
    \gamma\left(
      \sqrt{\operatorname{Var}_{X\sim Q}(I(X))}
      +\gamma
    \right)
    +\frac{3}{2}.
    \label{eq:single-message-variance}
\end{equation}
The same bound holds conditionally on a preceding history when $P$ and $Q$ are fixed given that history.
\end{lemma}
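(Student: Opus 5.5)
We would bound the standard deviation of $\ell=\ell_\Delta(J)$ by writing it as a function of $\log J$ plus a bounded rounding error, and then transfer the variance bound of Lemma~\ref{lem:pfr-log-index-variance}. Concretely, write $L=\log J\ge 0$ and $g(u)=u+2\log(1+u)+1$. From Eq.~\eqref{eq:elias-delta-length}, removing the floors changes $\lfloor\log j\rfloor$ by less than $1$ and the term $2\lfloor\log(\lfloor\log j\rfloor+1)\rfloor$ by less than $2$. So we would decompose $\ell=g(L)+E$, where $E$ is a random remainder lying in an interval of length at most $3$. Popoviciu's inequality then gives $\sqrt{\operatorname{Var}(E)}\le 3/2$. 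By Minkowski, $\sqrt{\operatorname{Var}(\ell)}\le\sqrt{\operatorname{Var}(g(L))}+3/2$, which accounts for the additive $\tfrac32$.

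The main term comes from the fact that $g$ is Lipschitz on $[0,\infty)$. Its derivative is $g'(u)=1+\frac{2}{(1+u)\ln 2}$, which is at most $1+\frac{2}{\ln 2}=\gamma$, attained at $u=0$. A $\gamma$-Lipschitz map scales the standard deviation by at most $\gamma$: using an independent copy, $\operatorname{Var}(g(L))=\tfrac12\mathbb E[(g(L)-g(L'))^2]\le\gamma^2\operatorname{Var}(L)$. Combining this with Eq.~\eqref{eq:log-index-variance} gives
\[
\sqrt{\operatorname{Var}(g(L))}\le\gamma\left(\sqrt{\operatorname{Var}_{X\sim Q}(I(X))}+\gamma\right),
\]
which is exactly Eq.~\eqref{eq:single-message-variance}.

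The delicate step is the rounding decomposition. One must check that the floor errors really confine $E$ to an interval of length $3$, and not something larger. The first floor contributes an error in $(-1,0]$. For the second term we must compare $\lfloor\log(\lfloor\log j\rfloor+1)\rfloor$ with $\log(1+\log j)$. Monotonicity gives $\log(\lfloor\log j\rfloor+1)\le\log(1+\log j)$, and $\lfloor\log j\rfloor+1>\log j$ gives the reverse inequality up to a floor, so this term lies within $(-1,0]$. After doubling, the error is in $(-2,0]$, so $E\in(-3,0]$.

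If the double-floor bookkeeping turned out looser than this, the first fallback would be to absorb the extra slack into the constant. The conditional version follows identically, applying Lemma~\ref{lem:pfr-log-index-variance} conditionally on the history that fixes $P$ and $Q$.
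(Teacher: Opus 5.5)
Your proposal is correct and matches the paper's proof step for step: the decomposition $\ell_\Delta(J)=g(\log J)+E$ with $E\in(-3,0]$, Popoviciu plus Minkowski for the additive $\tfrac32$, the $\gamma$-Lipschitz bound on $g$, and then Eq.~\eqref{eq:log-index-variance}. The one soft spot is your justification of the nested floor: $\lfloor\log j\rfloor+1>\log j$ by itself only yields $\log(1+\log j)-\lfloor\log(\lfloor\log j\rfloor+1)\rfloor<2$, whereas the needed bound $<1$ (which the paper simply asserts) comes from integrality, namely $\lfloor\log(1+u)\rfloor=\lfloor\log(1+\lfloor u\rfloor)\rfloor$ for $u\ge0$ because powers of two are integers; your fallback of absorbing the extra slack would not give the constant $\tfrac32$.
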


\begin{proof}
For $u\geq0$, define
\begin{equation}
    g(u)
    =
    u+2\log(1+u)+1.
\end{equation}
Let
\begin{equation}
    U
    =
    \log J.
\end{equation}
A direct comparison with Eq.~\eqref{eq:elias-delta-length} gives
\begin{equation}
    0
    \leq
    g(U)-\ell_\Delta(J)
    <3.
    \label{eq:elias-rounding-residual}
\end{equation}
To see this, let $n=\lfloor U\rfloor$ and $q=\lfloor\log(n+1)\rfloor$. Then $U-n<1$ and $\log(1+U)-q<1$, while both differences are nonnegative. Equation~\eqref{eq:elias-rounding-residual} follows from
\begin{equation}
    g(U)-\ell_\Delta(J)
    =
    (U-n)
    +2\left(\log(1+U)-q\right).
\end{equation}
The function $g$ is $\gamma$-Lipschitz on $[0,\infty)$ because
\begin{equation}
    g'(u)
    =
    1+\frac{2}{(1+u)\ln 2}
    \leq
    \gamma.
\end{equation}
A $\gamma$-Lipschitz function increases the standard deviation by at most a factor of $\gamma$, and a random variable supported on an interval of width three has variance at most $9/4$. Hence, by Minkowski's inequality,
\begin{align}
    \sqrt{\operatorname{Var}(\ell)}
    &\leq
    \sqrt{\operatorname{Var}(g(U))}
    +\frac{3}{2}
    \\
    &\leq
    \gamma\sqrt{\operatorname{Var}(U)}
    +\frac{3}{2}.
\end{align}
Applying Eq.~\eqref{eq:log-index-variance} proves Eq.~\eqref{eq:single-message-variance}.
\end{proof}

\subsection{Summing over a full requential coding trajectory}

At call $t$, let $P_t$ and $Q_t$ be the student and teacher distributions over the entire object $X_t$ transmitted by that REC call. For example, in the language-model experiments, $X_t$ may be a complete batch of sequences. Define
\begin{align}
    I_t(x)
    &=
    \log\frac{Q_t(x)}{P_t(x)},
    \\
    \mu_t
    &=
    \mathbb E_{X\sim Q_t}[I_t(X)]
    =
    \mathrm{KL}(Q_t\|P_t),
    \\
    s_t^2
    &=
    \operatorname{Var}_{X\sim Q_t}(I_t(X)).
    \label{eq:call-level-mu-s}
\end{align}
These quantities are determined by the history before call $t$. Let $J_t$ be the PFR index selected at call $t$ and encode it using the Elias delta code. Define
\begin{align}
    h_t
    &=
    2\log(1+\mu_t)
    +\kappa,
    \\
    r_t
    &=
    \gamma(s_t+\gamma)
    +\frac{3}{2}.
    \label{eq:trajectory-terms}
\end{align}
Applying Proposition~\ref{prop:conditional-mean-rec-message} and Lemma~\ref{lem:pfr-message-variance} conditionally at each call gives
\begin{align}
    \overline\ell_t
    &\leq
    \mu_t+h_t,
    \label{eq:conditional-message-mean-upper}
    \\
    \operatorname{Var}(\ell_t\mid\mathcal F_{t-1})
    &\leq
    r_t^2.
    \label{eq:conditional-message-variance}
\end{align}
Define
\begin{equation}
    M
    =
    \sum_{t=0}^{T-1}\mu_t,
    \qquad
    H
    =
    \sum_{t=0}^{T-1}h_t,
    \qquad
    V
    =
    \sum_{t=0}^{T-1}r_t^2.
    \label{eq:cumulative-terms}
\end{equation}
Here $M$ is the cumulative teacher-student KL evaluated along the realized trajectory, the quantity used as the leading codelength term in the paper.

\begin{theorem}[Upper bound on the realized codelength]
\label{thm:realized-codelength-upper}
The cumulative conditional mean satisfies
\begin{equation}
    \overline L
    \leq
    M+H.
    \label{eq:conditional-cumulative-mean-upper}
\end{equation}
The fluctuation satisfies
\begin{equation}
    \mathbb E\left[
      (L-\overline L)^2
    \right]
    \leq
    \mathbb E[V].
    \label{eq:cumulative-fluctuation-upper}
\end{equation}
Moreover, for every $\delta\in(0,1)$, with probability at least $1-\delta$,
\begin{equation}
    L
    \leq
    M+H
    +
    \sqrt{\frac{\mathbb E[V]}{\delta}}.
    \label{eq:realized-codelength-upper}
\end{equation}
\end{theorem}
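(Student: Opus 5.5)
The plan is to prove the three claims in order, each directly from an earlier result.

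For Eq.~\eqref{eq:conditional-cumulative-mean-upper}, sum the per-call conditional mean bound Eq.~\eqref{eq:conditional-message-mean-upper}, $\overline\ell_t\le\mu_t+h_t$, over $t=0,\ldots,T-1$. That bound is Proposition~\ref{prop:conditional-mean-rec-message} applied conditionally on $\mathcal F_{t-1}$, with $\mu_t=\mathrm{KL}(Q_t\|P_t)$ and $h_t=2\log(1+\mu_t)+\kappa$. The sum is exactly $M+H$ by the definitions in Eq.~\eqref{eq:cumulative-terms}. This holds pointwise along every realized trajectory, so no expectation is needed here.

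For Eq.~\eqref{eq:cumulative-fluctuation-upper}, start from the martingale identity Eq.~\eqref{eq:martingale-variance-identity} of Proposition~\ref{prop:cumulative-coding-fluctuation}. Each summand $\operatorname{Var}(\ell_t\mid\mathcal F_{t-1})$ is at most $r_t^2$ by Eq.~\eqref{eq:conditional-message-variance}, which is Lemma~\ref{lem:pfr-message-variance} applied conditionally. This is legitimate because $P_t$ and $Q_t$, and hence $s_t$ and $r_t$, are $\mathcal F_{t-1}$-measurable. Taking expectations and summing gives $\mathbb E[(L-\overline L)^2]\le\sum_t\mathbb E[r_t^2]=\mathbb E[V]$.

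For the high-probability statement Eq.~\eqref{eq:realized-codelength-upper}, apply Chebyshev's inequality in its second-moment (Markov on the square) form. Since $\mathbb E[L-\overline L]=0$, we have $\Pr\bigl(L-\overline L>\sqrt{\mathbb E[V]/\delta}\bigr)\le \mathbb E[(L-\overline L)^2]\,\delta/\mathbb E[V]\le\delta$. If $\mathbb E[V]=0$, the claim is trivial. On the complementary event,
\begin{equation*}
L\le\overline L+\sqrt{\mathbb E[V]/\delta}\le M+H+\sqrt{\mathbb E[V]/\delta},
\end{equation*}
where the second inequality uses the pointwise bound from the first part.

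The main obstacle is technical rather than conceptual: checking that the martingale-difference argument is valid. This requires $\ell_t$ to have finite second moments so that the cross terms vanish. That follows from the conditional variance bound $r_t^2<\infty$ whenever $s_t<\infty$; if $\mathbb E[V]=\infty$, the statement is vacuous anyway. One must also confirm that the filtration structure matches Lemma~\ref{lem:pfr-message-variance}, which holds in the infinite-candidate PFR limit. I would state these integrability conditions explicitly before invoking Proposition~\ref{prop:cumulative-coding-fluctuation}.
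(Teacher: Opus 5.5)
Your proposal is correct and follows the paper's proof step for step. You sum $\overline\ell_t\le\mu_t+h_t$, substitute $\operatorname{Var}(\ell_t\mid\mathcal F_{t-1})\le r_t^2$ into the martingale identity of Proposition~\ref{prop:cumulative-coding-fluctuation}, then combine Chebyshev's inequality with the pointwise bound $\overline L\le M+H$; your square-integrability remark (only the case $\mathbb E[V]<\infty$ matters) states explicitly a condition the paper leaves implicit.
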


\begin{proof}
Summing Eq.~\eqref{eq:conditional-message-mean-upper} gives Eq.~\eqref{eq:conditional-cumulative-mean-upper}. Substituting Eq.~\eqref{eq:conditional-message-variance} into Eq.~\eqref{eq:martingale-variance-identity} gives Eq.~\eqref{eq:cumulative-fluctuation-upper}. Finally, Chebyshev's inequality gives
\begin{equation}
    \Pr\left(
      L-\overline L
      \geq
      \sqrt{\frac{\mathbb E[V]}{\delta}}
    \right)
    \leq
    \delta.
\end{equation}
Combining this event with Eq.~\eqref{eq:conditional-cumulative-mean-upper} proves Eq.~\eqref{eq:realized-codelength-upper}.
\end{proof}

The theorem gives the appropriate interpretation of the cumulative KL used in the paper: the reported code length $\widehat{L}_\mathrm{req} = M+H$ from Eq.~\eqref{eq:req_codelen} is an upper bound on the cumulative one-step conditional mean $\overline{L}$ along the realized trajectory, which differs from the actual code length by a fluctuation controlled by Eq.~\eqref{eq:cumulative-fluctuation-upper}. We now show that in practice this fluctuation is tiny with high probability and thus can be ignored.

\paragraph{Getting a sense of scale.}
Suppose that every call satisfies $\mu_t\geq\mu$ and $s_t\leq s$. Let
\begin{equation}
    r
    =
    \gamma(s+\gamma)+\frac{3}{2}.
\end{equation}
Then $\mathbb E[V]\leq T r^2$, and Eq.~\eqref{eq:realized-codelength-upper} with $\delta=0.01$ gives the informal 99\% estimate
\begin{equation}
    L
    \leq
    \underbrace{M}_{\text{KL}}
    +
    \underbrace{H}_{\text{universal-code overhead}}
    +
    \underbrace{10r\sqrt T}_{\text{fluctuation}}.
    \label{eq:informal-99-upper}
\end{equation}
Since $M\geq T\mu$, the fluctuation term is at most a fraction
\begin{equation}
    \frac{10r}{\mu\sqrt T}
    \label{eq:informal-relative-surcharge}
\end{equation}
of the cumulative KL.

To translate this into sample units, suppose each call encodes $B$ conditionally independent samples (for sequence data, $B$ should be measured in sequences since tokens within a sequence are generally not independent). If the information density of one sample has mean $\mu'$ and standard deviation $s'$, then
\begin{equation}
    \mu
    =
    B\mu',
    \qquad
    s
    =
    \sqrt B\,s'.
\end{equation}
For large $B$, the additive constants in $r$ are negligible, and the conservative universal-code variance bound gives
\begin{equation}
    \frac{10r}{\mu\sqrt T}
    \approx
    10\gamma
    \frac{s'/\mu'}{\sqrt{BT}}.
    \label{eq:informal-sample-level-99}
\end{equation}
Accordingly, a sufficient leading-order condition for the fluctuation to be below $1\%$ of cumulative KL with at least $99\%$ probability is
\begin{equation}
    BT
    \geq
    10^6\gamma^2
    \qty(\frac{s'}{\mu'})^2.
    \label{eq:informal-one-percent-condition}
\end{equation}
The factor $\gamma<4$ comes from a uniform Lipschitz bound for the Elias delta length as a function of the log-index and is conservative. The important scaling is that the relative fluctuation decays as $(BT)^{-1/2}$.

Figure~\ref{fig:realized-codelength-dev} evaluates these quantities on a $100$M-parameter FineWeb run. The resulting estimate falls below $1\%$ within the training budget, with and without iso-loss projection.

\begin{figure}[H]
\centering
\includegraphics[width=0.8\linewidth]{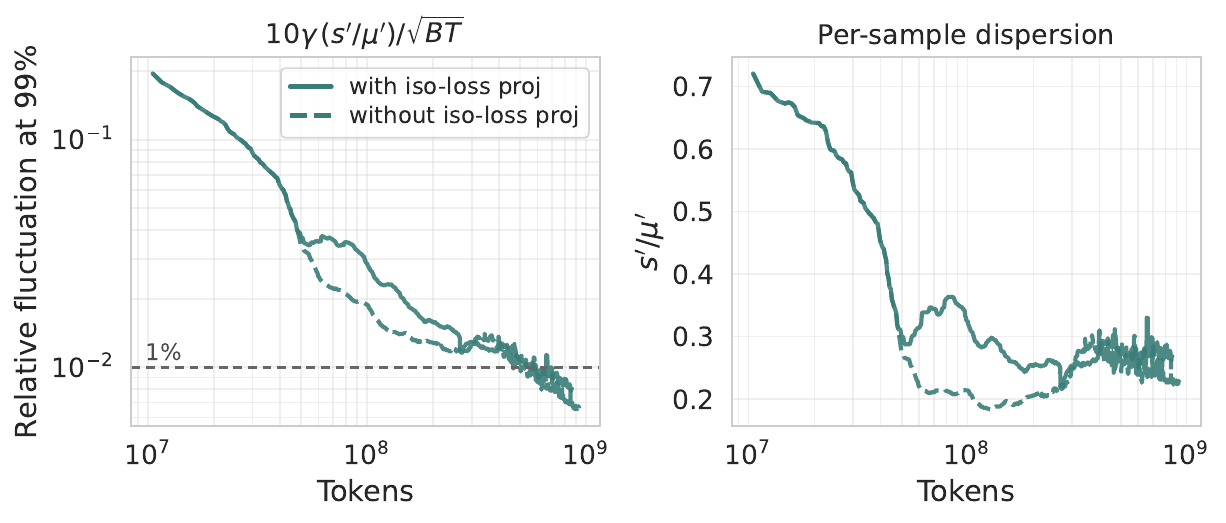}
\caption{\small\textbf{The realized requential codelength concentrates tightly around its cumulative conditional mean.} For a $100$M-parameter FineWeb model, with (solid) and without (dashed) iso-loss projection, we plot the sense-of-scale quantities versus tokens. \textbf{(Left)} the conservative estimated relative fluctuation at $99\%$ probability, $10\gamma\,(s'/\mu')/\sqrt{BT}$ from Eq.~\eqref{eq:informal-sample-level-99}, where $\mu'$ and $s'$ are the per-sequence mean and standard deviation of the information density and $BT$ is the number of coded sequences. \textbf{(Right)} the per-sample dispersion $s'/\mu'$. The estimated fluctuation falls below $1\%$ (dashed line) within the training budget.}
\label{fig:realized-codelength-dev}
\end{figure}

\section{Experiment Details}
\label{app:experiment}

Unless otherwise stated, we use the GPT-2 \citep{radford2019language} transformer architecture with 8 transformer blocks and a context length of 512 tokens, varying model size through the width, trained with the Adam optimizer ($\beta_1{=}0.9$, $\beta_2{=}0.95$, no weight decay) under a constant learning rate schedule with linear warmup. We tune the base learning rate on a small model and transfer it to larger models using $\mu$P \citep{yang2022tensor}, arriving at a base learning rate of 2 for all experiments. In $\mu$P, the per-layer learning rate is the base learning rate divided by the input dimension, so our reported base learning rate is larger than typical learning rates used for Adam. Width, depth, learning rate, batch size, and initialization seed are shared between the student and teacher, so the per-step KL starts at zero.

\subsection{Code Length Computation}\label{app:codelen}
For requential coding, we report the code length bound $\widehat L_\mathrm{req}$ from Eq.~\eqref{eq:req_codelen}. Numerically, the per-step KL between the (EMA) teacher and the (EMA) student $\mathrm{KL}(Q_t\|P_t)$ is estimated by Monte Carlo on the synthetic batch $X_t$ that the student trains on: since $X_t \sim Q_t$, $\log Q_t(X_t)-\log P_t(X_t)$ is an unbiased estimate of $\mathrm{KL}(Q_t\|P_t)$. Our batch size exceeds $10^5$ tokens in all experiments, making this estimator quite accurate in practice. The lower-order universal-code terms in Eq.~\eqref{eq:req_codelen} are included explicitly in the block-size tradeoff calculation and omitted elsewhere since our large batch size makes these terms completely negligible (see Figure~\ref{fig:block-size-tradeoff}). For prequential coding, the code length is the cumulative cross-entropy loss of a model trained on real data, $L_{\mathrm{preq}} = \sum_{t=1}^{T} -\log P_{t-1}(X_t).$ For this model, we reuse the teacher in requential coding (with iso-loss projection turned off).

\subsection{Datasets}\label{app:owt}\label{app:c5m}
We use the OpenWebText dataset at \url{https://huggingface.co/datasets/Skylion007/openwebtext}, keeping only documents restricted to 96 common alphanumeric symbols, and apply character-level tokenization with vocabulary size $V = 96$. We use the CIFAR-5M dataset \citep{nakkiran2020deep} at \url{https://github.com/preetum/cifar5m}, converting the $32 \times 32 \times 3$ images to greyscale and flattening to a 1D sequence of 1024 tokens in raster-scan order, with the pixel intensities $\{0,\ldots,255\}$ as the vocabulary ($V = 256$). We use the FineWeb dataset \citep{penedo2024fineweb} with the GPT-2 BPE tokenizer ($V = 50257$). All experiments use a sequence length of 512 tokens.

\subsection{EMA, Teacher Smoothing and Iso-Loss Projection}\label{app:smoothing-projection}
Algorithm~\ref{alg:smoothing-projection} gives the general requential encoder used in our experiments. It maintains a raw teacher that trains on the real batches and a raw student that trains on the decoded synthetic batches, together with exponential moving averages of both, and applies iso-loss projection according to a pre-defined schedule.  We now describe these two techniques in detail.

\begin{algorithm}[t]
\caption{\small\textbf{Requential encoding with teacher smoothing and iso-loss projection.} The raw teacher $Q$ trains on the real batches $Z_0, Z_1, \ldots$ and the raw student $P$ on the decoded synthetic batches. We initialize the teacher from the same point as the student and explicitly show its updates. We keep exponential moving averages $Q^{\text{EMA}}$ and $P^{\text{EMA}},$ used as the REC target and reference. The boolean $\mathrm{proj}$ toggles iso-loss projection at the steps $\mathcal{S}$.  $Q$ and $P$ are understood to be bundled with their optimizer states (the Adam moments), so the assignment copies the full training state of the student into the teacher's upon each reset. }
\label{alg:smoothing-projection}
\footnotesize
\begin{algorithmic}[1]
\REQUIRE real data batches $Z_0, Z_1, \ldots$, \; boolean $\mathrm{proj}$, \; projection steps $\mathcal{S}$, \; student initialization $P_0$, \; update rule $G$, \; PRNG seed $s$
\ENSURE messages $(m_t)_{t=0}^{T-1}$
\STATE initialize raw teacher $Q_0 \leftarrow P_0$, \quad $Q^{\text{EMA}}_0 \leftarrow P_0$, \; $P^{\text{EMA}}_0 \leftarrow P_0$, \; $j \leftarrow 0$ \hfill {\color{black!55}// raw student starts at $P_0$, \, $j$ points to the next real batch}
\FOR{$t = 0$ to $T-1$}
  \IF{$\mathrm{proj}$ \textbf{and} $t \in \mathcal{S}$}
    \STATE $\ell^\ast \leftarrow$ validation loss of $Q^{\text{EMA}}_t$ \hfill {\color{black!55}// pre-projection loss}
    \STATE reset teacher to student: $Q_t \leftarrow P_t$, \; $Q^{\text{EMA}}_t \leftarrow P^{\text{EMA}}_t$
    \WHILE{validation loss of $Q^{\text{EMA}}_t$ exceeds $\ell^\ast$}
      \STATE $Q_t \leftarrow G(Q_t, Z_j)$, \; $j \leftarrow j+1$, \; $Q^{\text{EMA}}_t \leftarrow \mathrm{EMA}(Q^{\text{EMA}}_t, Q_t)$ \hfill {\color{black!55}// teacher recovery, student paused: no bits}
    \ENDWHILE
  \ENDIF
  \STATE $S_t \leftarrow \mathrm{PRNG}(s,t,\cdot)$
  \STATE $m_t \leftarrow \mathrm{REC.Encode}(Q^{\text{EMA}}_t, P^{\text{EMA}}_t, S_t)$, \quad $X_t \leftarrow \mathrm{REC.Decode}(P^{\text{EMA}}_t, m_t, S_t)$ \hfill {\color{black!55}// $X_t \sim Q^{\text{EMA}}_t$}
  \STATE $P_{t+1} \leftarrow G(P_t, X_t)$, \quad $Q_{t+1} \leftarrow G(Q_t, Z_j)$, \; $j \leftarrow j+1$ \hfill {\color{black!55}// raw models train}
  \STATE $P^{\text{EMA}}_{t+1} \leftarrow \mathrm{EMA}(P^{\text{EMA}}_t, P_{t+1})$, \quad $Q^{\text{EMA}}_{t+1} \leftarrow \mathrm{EMA}(Q^{\text{EMA}}_t, Q_{t+1})$
\ENDFOR
\STATE \textbf{return} $(m_t)_{t=0}^{T-1}$ \hfill {\color{black!55}//  length $\approx \sum_{t} \mathrm{KL}(Q^{\text{EMA}}_t\,\|\,P^{\text{EMA}}_t)$}
\end{algorithmic}
\end{algorithm}

\paragraph{EMA and Teacher Smoothing.}
In place of learning rate decay, we apply exponential moving average (EMA) of the iterates, which has a similar noise-reduction effect \citep{hagele2024scaling}. For the student, the EMA state is updated as $P^{\text{EMA}}_{t} = e^{-\frac{1}{\alpha t}} P^{\text{EMA}}_{t-1} + \qty(1 - e^{-\frac{1}{\alpha t}}) P_t,$ where $(P_t)_t$ are the raw student iterates, and $\alpha$ sets the EMA timescale as a fraction of current elapsed steps. We always apply EMA to the student with $\alpha=0.01$, and use the EMA student to evaluate the loss on real data and the teacher-student KL.  

When teacher smoothing is enabled, we also apply EMA to the teacher and use the EMA teacher to generate training data for the student, evaluate the teacher-student KL and compute teacher's loss on real data. The teacher EMA state is updated as $Q^{\text{EMA}}_{t} = e^{-\frac{1}{\max(50, \alpha t)}} Q^{\text{EMA}}_{t-1} + \qty(1 - e^{-\frac{1}{\max(50, \alpha t)}}) Q_t,$ where $(Q_t)_t$ are the raw teacher iterates and the decay timescale is clipped from below at $50$ steps to avoid under-smoothing the teacher early on in training, which we found led to very large initial KL. Algorithm~\ref{alg:smoothing-projection} abbreviates either update as $\mathrm{EMA}(\cdot,\cdot)$, mixing the current moving average (first argument) with the latest raw iterate (second argument) to produce the next moving average.

\paragraph{Iso-loss projection.}
At a given set of projection steps $\mathcal{S}$, 
\emph{iso-loss projection} resets the teacher state to the student state and then performs a teacher recovery phase. The state includes the model weights and optimizer buffers, so the reset hands the teacher a fully identical training state to the student's before further training. During the teacher recovery phase, the teacher trains on real data with the student temporarily paused until the teacher's pre-projection loss is recovered. The student then resumes training together with the teacher until the next projection step is encountered. 
The projection steps $\mathcal{S}$ are geometrically spaced, the first at the 100-th student step and each subsequent step is $1.5\times$ the previous (i.e., $\mathcal{S} = \{100, 150, 225, \ldots\})$, matching the fact that meaningful progress occurs on a logarithmic timescale for training on natural data \citep{kaplan2020scaling}. No synthetic tokens are generated during a recovery, so it adds teacher compute and data but no student code length.

\paragraph{Decoding.} The decoder runs \textsc{Decode} from Figure~\ref{alg:requential-algs} except using the EMA student as the REC reference, and needs none of the teacher-side machinery of Algorithm~\ref{alg:smoothing-projection}. Reconstructing the student trajectory step by step, it forms each $P^{\text{EMA}}_t$ from the messages and the shared randomness, decodes $X_t$, and applies the same student update $P \leftarrow G(P, X_t)$ followed by the EMA update. Teacher smoothing and iso-loss projection are entirely invisible at the decoder.

\paragraph{Evaluating Code Length.} As before, we never run REC to measure code length. Instead, we replace the REC encode-decode pair with the equivalent process that draws each $X_t$ directly from the EMA teacher, and report the cumulative conditional KL $\sum_t \mathrm{KL}\!\left(Q^{\text{EMA}}_t \,\|\, P^{\text{EMA}}_t\right)$ between the EMA teacher and EMA student over the coded steps. With teacher smoothing disabled, $Q^{\text{EMA}}_t$ is replaced by the raw teacher $Q_t.$ Teacher recovery during iso-loss projection generates no synthetic batches and contributes zero bits.

\subsection{Comparison with Prequential Coding and PTQ (Figure~\ref{fig:train_curves})}
We train models with roughly 100M parameters (including embedding) on each dataset, with a batch size of 1024 sequences (0.5M tokens per step) and a learning rate warmup of 131M tokens. The teacher trains on up to 5B real tokens on OpenWebText and CIFAR-5M and 20B on FineWeb, and the figure truncates every curve at the Chinchilla budget of $D = 20N$ student tokens so every coded student model can be compared after training on the same number of tokens. The vanilla requential coding did not use teacher smoothing or iso-loss projection, but kept the student EMA.

\subsection{Model Size Scaling at Fixed Data (Figure~\ref{fig:scaling-model})}
On OpenWebText and CIFAR-5M we sweep widths 128 to 1600 (1.7M to 247M parameters), training each model on 5B real tokens with a batch size of 256 sequences and a warmup of 16M tokens. On FineWeb we sweep widths 128 to 2752 (14M to 1B parameters), training on 20B real tokens with a batch size of 1024 sequences, and drop the smallest model from the figure. All runs use iso-loss projection and teacher smoothing. The target loss in the bottom row is the final loss of the smallest plotted model.

\subsection{Ensembles (Figure~\ref{fig:ensemble})}
We train ensembles of $E \in \{1, 2, 4, 8\}$ members on FineWeb, each member a width-512 transformer with 76.9M parameters, sharing a single teacher of the same architecture, with a batch size of 1024 sequences and the per-member Chinchilla budget of 1.54B real tokens. These runs use teacher smoothing, but for simplicity we did not use iso-loss projection since the choice which student to project is ambiguous. All members train on the same synthetic batch from different initialization seeds, and the per-step KL is computed between the teacher and the average of the members' predictions, so a single message stream reconstructs the entire ensemble at the decoder.

\subsection{Generalization Bounds (Figures~\ref{fig:generalization} and~\ref{fig:generalization-owt})}\label{app:bounds}
We evaluate the bound of Theorem~\ref{thm:finzi_bound_rewritten}, with $K=$ \texttt{np.linspace(0.0, 1.0, 1002)[1:-1]} ($|K| = 1000$) and failure probability $\delta = 0.01$. For the requential bound, we use teacher smoothing and iso-loss projection, set $D$ as the number of real tokens processed by the teacher, and estimate the empirical risk and the statistic $\Sigma$ of the EMA student on the teacher training data. The PTQ bounds instead apply to a model trained normally on $D$ real tokens without iso-loss projection, coding its weights at $4$ bits per parameter ($L = 4N$) and using its own empirical risk and statistic $\Sigma$ without actually performing quantization, representing an idealized lossless PTQ. The fixed-data column of Figure~\ref{fig:generalization} terminates each run at $D = 2$B real tokens, the compute-optimal column evaluates at $D = 20N$, and the bits-per-parameter column reports $\widehat L_\mathrm{req}/N$ at $D = 20N$ with a power-law fit to the five largest models. Figure~\ref{fig:generalization-owt} reports the corresponding OpenWebText bound panels, which we moved out of Figure~\ref{fig:generalization} for space.

\begin{figure}[t]
\centering
\includegraphics[width=0.78\linewidth]{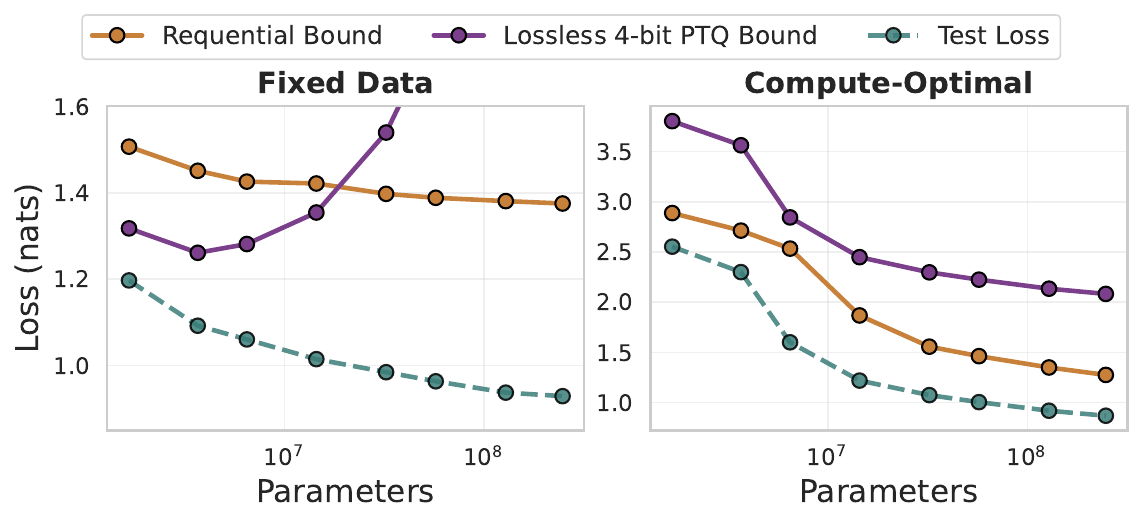}
\caption{\small\textbf{OpenWebText generalization bounds follow the same trends as CIFAR-5M and FineWeb in Figure~\ref{fig:generalization}.} The requential bound improves with scale under both the fixed-data protocol (\textbf{left}, $D{=}2$B tokens, falling from $1.51$ to $1.38$ nats as $N$ grows from 1.7M to 247M) and the Chinchilla protocol (\textbf{right}, $D{=}20N$), while the PTQ bounds on a vanilla teacher diverge. The plotted teacher loss is the test loss of the vanilla teacher on which the PTQ bounds are built. The certified gap at the Chinchilla budget is down to $0.26$ nats by 247M parameters.}
\label{fig:generalization-owt}
\end{figure}

\section{Generalization Bound}
\label{appx:discuss_bound}

For reference, we restate the generalization bound in \citet{finzi2025compute}. The bound relates the population tokenwise risk of a learned predictor to its empirical risk and a complexity term derived from a valid prefix-free code of the model. All complexity-dependent terms in the bound are monotone in $C$, so any improvement in the description length directly tightens the resulting guarantee.

\begin{theorem}[Generalization bound (adapted from \citet{finzi2025compute})]
\label{thm:finzi_bound_rewritten}
Let $X_{1:D}$ be a (possibly dependent) sequence of tokens in a vocabulary of size $V$.
For any model $h$, let $p_h(\cdot \mid X_{<k})$ denote its predictive distribution and define the tokenwise negative log-likelihood (in nats)
\[
R_h(X_k \mid X_{<k}) := - \ln p_h(X_k \mid X_{<k}).
\]
Define the empirical risk and tokenwise expected (population) risk
\[
\widehat R_h := \frac{1}{D}\sum_{k=1}^D R_h(X_k \mid X_{<k}),
\qquad
R_h := \frac{1}{D}\sum_{k=1}^D \mathbb{E}\!\left[R_h(X_k \mid X_{<k}) \mid X_{<k}\right].
\]

Fix a finite set $K \subset (0,1)$ and a confidence level $\delta \in (0,1)$.
Assume there exists $\Delta > 0$ such that for all $k$,
\begin{equation}
\label{eq:finzi_delta_assump}
R_h(X_k \mid X_{<k}) \;-\; \mathbb{E}_{Y_k \sim p_h(\cdot \mid X_{<k})}\!\left[ R_h(Y_k \mid X_{<k}) \right] \;\le\; \Delta.
\end{equation}
Let $L(h)$ be the length of a prefix-free code for $h$ in bits, and define the per-token complexity
\[
C := \frac{L(h)\ln 2 + \ln\frac{|K|}{\delta}}{D}.
\]

Define the normalized deviations
\[
A_k := \frac{\mathbb{E}_{Y_k \sim p_h(\cdot \mid X_{<k})}[ R_h(Y_k \mid X_{<k})] - R_h(X_k \mid X_{<k})}{\Delta},
\]
the function $v(a) := a - \ln(1+a)$ for $a > -1$, and
\[
\Sigma(C,\Delta,\{A_k\}_{k=1}^D,K)
:=
\min_{s \in K}
\left\{
\Delta \sqrt{C}\,\frac{1-s}{s}
\;+\;
\frac{\Delta}{\sqrt{C}}\cdot \frac{1}{D}\sum_{k=1}^D \frac{v(sA_k)}{s}
\right\}.
\]

Then, simultaneously for all $h$ in the coded hypothesis class, with probability at least $1-\delta$,
\begin{equation}
\label{eq:finzi_core_bound_rewritten}
R_h \;\le\; \widehat R_h \;+\; \Delta C \;+\; \Sigma(C,\Delta,\{A_k\}_{k=1}^D,K)\sqrt{C}.
\end{equation}

Moreover, for the categorical NLL, there exists a prediction-smoothed model
$p_{s,h}(\cdot \mid X_{<k}) = (1-\alpha)p_h(\cdot \mid X_{<k}) + \alpha/V$
for some $\alpha \in (0,1)$ such that
\begin{equation}
\label{eq:finzi_smoothing_step_rewritten}
R_{s,h} \;\le\; \widehat R_h \;+\; C\ln V
\;+\; \Sigma(C,\Delta,\{A_k\}_{k=1}^D,K)\sqrt{C}
\;+\; \sqrt{2C}.
\end{equation}
\end{theorem}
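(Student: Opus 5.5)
The bound in Theorem~\ref{thm:finzi_bound_rewritten} is a restatement, so the plan is to rebuild it from a union bound over the prefix-free code, a martingale exponential inequality in each parameter $s\in K$, and a final smoothing step.

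\textbf{Step 1: prior from the code.} Use the prefix-free code for $h$ to define a prior weight $2^{-L(h)}$ on the coded hypothesis class. By Kraft's inequality these weights sum to at most one. Union-bounding over all $h$ and all $s\in K$ then costs $\ln(|K|/\delta)+L(h)\ln 2 = DC$ nats. This is exactly why $C$ has the stated form.

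\textbf{Step 2: supermartingale for fixed $h$ and $s$.} For each fixed $h$ and $s\in K$, consider
\[
\exp\Bigl(\textstyle\sum_{k\le n}\bigl[-sA_k-\ln(1-s\,\mathbb{E}[A_k\mid X_{<k}]\cdots)\bigr]\Bigr).
\]
The cleaner route is to use the inequality $\mathbb{E}[\exp(\lambda Y - v\text{-type correction})]\le 1$ for $Y=$ a centered increment bounded above. Since $A_k\ge -1$ by assumption~\eqref{eq:finzi_delta_assump}, the process
\[
\prod_k (1+sA_k')\,e^{-sA_k'}\cdot e^{v(sA_k)},
\]
built from the deviation of $R_h$ from its conditional mean, is a nonnegative supermartingale. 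This is the Fan--Grama--Liu / self-normalized Bernstein route, in which $v(a)=a-\ln(1+a)$ appears naturally. Ville's inequality turns it into a high-probability bound:
\[
\sum_k\bigl(\mathbb{E}[R_h\mid X_{<k}]-R_h(X_k\mid X_{<k})\bigr)/\Delta\ \le\ \frac{DC}{s}+\frac{1-s}{s}\cdot(\ldots)+\sum_k\frac{v(sA_k)}{s}.
\]

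\textbf{Step 3: assemble.} Divide by $D$ and rescale $s$ by $\sqrt C$ so that the terms balance as $\Delta\sqrt C$ times the stated expression. Then take the minimum over $s\in K$, which is free after the union bound. This gives~\eqref{eq:finzi_core_bound_rewritten}.

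The main obstacle is getting the exact algebraic form of $\Sigma$. That means choosing the right supermartingale so that the remainder is precisely $\Delta C+\Sigma\sqrt C$ rather than a looser Bernstein form. I would first try matching the known Fan-type inequality and adjusting the parametrization of $s$ until the constants coincide.

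\textbf{Step 4: smoothing.} For the smoothed model, mixing with the uniform distribution with weight $\alpha$ makes the NLL bounded by $\ln(V/\alpha)$. It also raises the empirical risk by at most $-\ln(1-\alpha)$. Choosing $\alpha$ of order $\sqrt{C}$, and accounting for the extra discretization of $\alpha$ inside the code, trades these costs to produce $C\ln V+\sqrt{2C}$ in place of $\Delta C$. This yields~\eqref{eq:finzi_smoothing_step_rewritten}; the details follow \citet{finzi2025compute}.
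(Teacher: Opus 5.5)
\textbf{There is a genuine gap in Steps 2--3.} Your outer architecture is right: a Kraft prior $2^{-L(h)}$, a union bound over $K$ costing $DC$ nats in total, an exponential supermartingale with Markov/Ville, then smoothing. (The paper does not reprove this theorem; it imports it from \citet{finzi2025compute}.) But Steps 2--3, which carry all of the content of~\eqref{eq:finzi_core_bound_rewritten}, are not supplied.

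\begin{itemize}
\item Your product $\prod_k(1+sA_k')e^{-sA_k'}e^{v(sA_k)}$ is identically $1$ if $A_k'=A_k$, because $e^{a-v(a)}=1+a$.
\item If $A_k'$ is instead the centered increment, the product is not a supermartingale. When $A_k$ is $X_{<k}$-measurable, so that $A_k=\mathbb{E}[A_k\mid X_{<k}]\neq 0$, each factor equals $e^{v(sA_k)}>1$.
\item Your resulting display carries a spurious term $\frac{1-s}{s}\cdot(\ldots)$.
\item Rescaling $s$ by $\sqrt C$ is inconsistent with $\Sigma$, in which $s\in K$ enters $v(sA_k)$ unrescaled.
\end{itemize}

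Since you explicitly defer the exact form of $\Sigma$ to later matching, the core bound is not established.

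\textbf{The missing idea.} No Fan-type inequality is needed, only the exact identity above.
\begin{itemize}
\item Put $\mu_k:=\mathbb{E}[A_k\mid X_{<k}]$. For $s\in(0,1)$, assumption~\eqref{eq:finzi_delta_assump} gives $1+sA_k\ge 1-s>0$.
\item Define $M_n:=\prod_{k\le n}\exp\bigl(s(A_k-\mu_k)-v(sA_k)\bigr)=\prod_{k\le n}(1+sA_k)e^{-s\mu_k}$. Then $\mathbb{E}[M_n\mid X_{<n}]=M_{n-1}(1+s\mu_n)e^{-s\mu_n}\le M_{n-1}$ by $1+x\le e^x$.
\item Markov at level $\delta 2^{-L(h)}/|K|$ gives $\sum_k(A_k-\mu_k)\le\frac1s\bigl(DC+\sum_k v(sA_k)\bigr)$.
\item The model-entropy centering inside $A_k$ is $X_{<k}$-measurable, so it cancels. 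The left side is therefore exactly $D(R_h-\widehat R_h)/\Delta$.
\item The $\frac{1-s}{s}$ term is pure algebra: $\frac{\Delta C}{s}=\Delta C+\Delta C\frac{1-s}{s}$. Factoring out $\sqrt C$ and minimizing over $s\in K$ (free after the union bound) gives~\eqref{eq:finzi_core_bound_rewritten} exactly.
\end{itemize}

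\textbf{Step 4.} Apply the core bound to $p_{s,h}$ with $\Delta=\ln(V/\alpha)$. This is valid because $R_{s,h}\le\ln(V/\alpha)$ and the model entropy is nonnegative, so the $\Sigma$ there is the smoothed model's. Combined with $\widehat R_{s,h}\le\widehat R_h-\ln(1-\alpha)$, you need $C\ln(1/\alpha)-\ln(1-\alpha)\le\sqrt{2C}$. The clean choice is $\alpha=C/(1+C)$ (order $C$, not $\sqrt C$), where this quantity equals $(1+C)\ln(1+C)-C\ln C\le\sqrt{2C}$. No code bits for $\alpha$ are needed, since $\alpha$ depends only on $C$ and hence on $h$. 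Charging for a discretized $\alpha$ would replace $C$ by a larger quantity than the one in the statement.
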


\end{document}